\PassOptionsToPackage{table}{xcolor}
\documentclass[11pt]{article}

\usepackage[final]{acl}

\usepackage{times}
\usepackage{latexsym}
\usepackage[T1]{fontenc}
\usepackage[utf8]{inputenc}
\usepackage{microtype}
\usepackage{inconsolata}
\usepackage{graphicx}
\graphicspath{{figures/}{latex/figures/}}
\usepackage{booktabs}
\usepackage{amsmath}
\usepackage{amssymb}
\usepackage{amsthm}
\usepackage{array}
\usepackage{tabularx}
\usepackage{xcolor}
\usepackage{fvextra}
\usepackage{tcolorbox}
\tcbuselibrary{breakable,skins}
\usepackage{tikz}
\usepackage{enumitem}
\usepackage{url}

\usetikzlibrary{arrows.meta,positioning,shapes.geometric}

\newtheorem{proposition}{Proposition}

\newcommand{\dataset}{GNRS-Bench}
\newcommand{\ours}{GNRS-Search}
\newcommand{\seval}{s_{\mathrm{eval}}}
\newcommand{\ssearch}{s_{\mathrm{search}}}
\newcommand{\hardok}{\textsc{hard-ok}}
\newcommand{\overall}{\textsc{Overall}}

\newcommand{\slot}[1]{\textsc{#1}}
\newcommand{\pct}[1]{#1\%}
\newcommand{\best}[1]{\textbf{#1}}

\newcolumntype{Y}{>{\raggedright\arraybackslash}X}
\newtcolorbox{promptbox}{%
  enhanced jigsaw,
  breakable,
  parbox=false,
  colback=gray!10,
  colframe=gray!10,
  boxrule=0pt,
  arc=0pt,
  outer arc=0pt,
  width=\linewidth,
  left=3pt,
  right=3pt,
  top=3pt,
  bottom=3pt,
  boxsep=0pt,
  before skip=4pt,
  after skip=8pt
}
\newcommand{\promptinput}[2]{%
\par\medskip\noindent\textbf{#1.}\nopagebreak\par\nobreak
\IfFileExists{latex/appendix_prompts/#2}{%
\begin{promptbox}
\VerbatimInput[fontsize=\footnotesize,breaklines=true,breakanywhere=true]{latex/appendix_prompts/#2}%
\end{promptbox}
}{%
\begin{promptbox}
\VerbatimInput[fontsize=\footnotesize,breaklines=true,breakanywhere=true]{appendix_prompts/#2}%
\end{promptbox}
}%
}

\title{Grounded Normative Rule Generation with Structured Search}

\author{%
  {\bfseries Fanqi Kong$^{1,2}$, Huaxiao Yin$^{3}$, Ruijie Zhang$^{3}$,} \\
  {\bfseries Xiaoyuan Zhang$^{1,2}$, Yizhe Huang$^{1,2}$, Jian Gao$^{2}$, Shuo Chen$^{2}$\thanks{Corresponding Author. \texttt{chenshuo@bigai.ai}, \texttt{s.c.zhu@pku.edu.cn}}, Song-Chun Zhu$^{1,2}$\textsuperscript{*}}\\ \\
  $^1$Peking University \quad \quad
  $^2$State Key Laboratory of General Artificial Intelligence, BIGAI \\
  $^3$University of Chinese Academy of Sciences
}

\begin{document}
\maketitle

\begin{abstract}
Normative rules like institutional charters and workplace policies must be both human-readable and operationally verifiable against actual environment records. However, current language generation and structured-output benchmarks primarily reward surface fluency or schema compliance, leaving operational grounding weakly tested. This creates a critical vulnerability where standard language models generate plausible-sounding policies that fail during enforcement because they rely on unavailable data logs or misaligned scopes. To address this challenge, we formalize the problem as Grounded Normative Rule Synthesis (GNRS) and introduce GNRS-Search, a framework that utilizes Markov Chain Monte Carlo (MCMC) sampling to optimize a discrete, five-slot And-Or Graph (AOG). By explicitly decoupling intermediate operational structure from final prose generation, this method isolates executable feasibility from writing style and allows rule failures to be localized prior to surface realization. We evaluate our approach on \dataset{}, a benchmark spanning 116 controlled goals across eight scene families, and RealCharter-Bench, which evaluates transfer to 53 real-world policy tasks with hidden source clauses. GNRS-Search raises average rubric quality from 68.8\% to 81.0\% and ranks first under a disclosed executable composite metric, while systematic slot interventions confirm that performance gains stem from robust operational logic rather than rhetorical tuning. Ultimately, by transforming automated rule drafting into an inspectable search problem, this work provides a foundational paradigm for deploying verifiable and compliance-ready personal agents within regulated environments.
\end{abstract}

\section{Introduction}
\label{sec:intro}

Normative rules, such as institutional charters and workplace policies, must fundamentally satisfy a dual requirement: they must be human-readable and operationally verifiable when compliance is audited~\citep{sergot1986british, solar2006combinatorial, governatori2010conceptually}. This dual purpose creates a basic tension for language generation. A generated clause may exhibit high surface fluency while entirely failing as a valid rule by relying on unavailable evidence or untraceable enforcement procedures~\citep{wiseman2017challenges,maynez2020faithfulness,ji2023survey}.

We formalize this problem as Grounded Normative Rule Synthesis, which we abbreviate as GNRS.  Given a normative goal, an institutional schema, and a record layer containing environment traces and logs, the task is to write a natural-language rule that is both readable and strictly backed by available data.  
Figure~\ref{fig:teaser} illustrates this setting with a restaurant attendance example. 
Direct prompting can produce a fluent policy that invokes unsupported evidence channels, such as tracking systems or staff testimony. The core difficulty goes beyond factual hallucination: rule writing requires coordinating scope, trigger, norm, evidence, and procedure. A small change in one component can invalidate the others; for example, a rule may require a procedure to verify a violation against an automated log, yet select verbal staff testimony as its evidence, which is not stored in any accessible record---the procedure then has no data source it can actually inspect. We therefore treat rule generation as a structured decision problem before surface realization, so that failures can be localized to the component that caused them.

While existing text-generation tools address related problems, they do not enforce operational grounding. 
Constrained decoding and structured-output methods improve syntactic validity by restricting generations to schemas or grammars \citep{scholak2021picard, willard2023efficient}, but they do not check whether the generated content is supported by available environment records. 
Iterative self-refinement can improve outputs through feedback loops \citep{madaan2023self, shinn2023reflexion}, but it lacks an external record layer for verifying evidence channels and procedures. 
Interactive language agents use environmental feedback to guide action selection \citep{yao2022react, shridhar2020alfworld}, whereas GNRS asks whether the generated rule text itself is backed by auditable data logs. Related multi-agent and social-interaction studies similarly highlight the role of environment structure in shaping agent behavior \citep{kong2024learning, huang2024adasociety, zhang2026world}. 
Our core objective is to identify structural configurations that make a rule operationally grounded before it is rendered as prose.

To solve this problem, we implement GNRS-Search.  Instead of generating the final text directly, our framework models the rule as a five-slot And-Or Graph~\citep{zhu2007stochastic, wu2011numerical}, which we call an AOG, representing intermediate operational commitments across scope, trigger, norm, evidence, and procedure.  The system builds a candidate space by combining deterministic pieces from rule templates, context schemas, and record logs with language model proposals.  We then use Markov Chain Monte Carlo sampling~\citep{hastings1970monte}, or MCMC, to iteratively edit this discrete structure.  The search explores candidate changes through four explicit operations including slot replacement, parameter edits, evidence swaps, and joint trigger-evidence adjustments.  An internal scoring function guides the search by evaluating record grounding and penalizing hard feasibility violations.  This approach successfully turns rule writing into an inspectable search problem where structural errors can be caught and localized before the final text is rendered.

We evaluate our method on the introduced \dataset{}, which covers 116 controlled goals across 8 scene families, and test transfer to real-derived policy tasks on RealCharter-Bench, which contains 53 tasks derived from authentic policies. Experimental results show that GNRS-Search improves average rubric quality from 68.8\% to 81.0\% on the main test. It also ranks first under our overall executability metric across different model backbones and out-of-distribution scenes. Human evaluations and sensitivity analyses confirm that these improvements come from correct operational structure rather than better phrasing.

This paper makes four contributions.
\begin{itemize}[leftmargin=*,noitemsep]
    \item We define GNRS as a structured-NLG task that tests whether a readable
    rule is also backed by available records.
    \item We introduce \dataset{} with 116 controlled goals and
    RealCharter-Bench with 53 real-derived policy tasks.
    \item We implement \ours{}, an AOG search method with MCMC proposals for
    selecting grounded rule structures before rendering.
    \item We show gains on main, robustness, and real-derived tests. Human
    calibration supports the rubric, while slot interventions show that the
    gains come from operational meaning.
\end{itemize}

\begin{figure*}[!htbp]
\centering
\includegraphics[width=\linewidth]{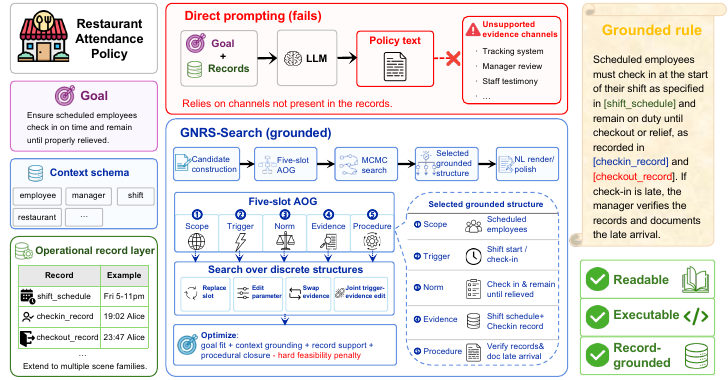}
\caption{Normative rules must be both readable and operationally grounded.
Direct prompting often satisfies the surface-language requirement but introduces
unsupported evidence channels or vague enforcement procedures. GNRS-Search
addresses this by searching over a five-slot operational structure, covering
scope, trigger, norm body, evidence, and procedure, before rendering the
selected structure into natural language grounded in concrete operational
records.}
\label{fig:teaser}
\end{figure*}

\section{Related Work}
\label{sec:related}
\paragraph{Grounded and constrained generation.}
Data-to-text and grounded NLG benchmarks study faithful realization from
structured inputs \citep{gardent2017webnlg,parikh2020totto,nan2021dart}.
Constrained decoding, JSON-mode generation, and grammar-guided decoding improve
parseability \citep{openai2024structured,dong2025xgrammar,tam2024let}, and recent work
further trains structured-output ability at the schema level
\citep{lu2025learning}. Best-of-\(N\) selection and verifier-style
reranking are also common prompting enhancements \citep{cobbe2021training}.
GNRS differs because the output is a normative clause whose validity depends not
only on format or fluency, but on whether its roles, triggers, evidence, and
procedures are grounded in operational records.

\paragraph{Rule grounding, agents, and normative NLP.}
Rule-oriented benchmarks test rule following or evidence-grounded reasoning
\citep{zhou2025rulearena,jiayang2025integround}, while GNRS asks
the model to synthesize the rule itself. Interactive agent work uses
environmental feedback to guide behavior
\citep{park2023generative,wang2023voyager,liu2024agentbench}; broader multi-agent benchmarks study error attribution in such systems \citep{kong2026aegis} and cross-environment learning \citep{zhang2025autoenv}, while social world models and social-interaction reasoning examine how physical and social dynamics are unified \citep{zhang2026social, kong2026siv}; GNRS instead
uses environmental structure as typed primitives and evidence channels that
constrain text generation. Legal and normative NLP has studied parsing,
retrieval, classification, and compliance reasoning
\citep{hendrycks2021cuad,guha2023legalbench,ren2024emergence,
holzenberger2020dataset,boella2006introduction}. GNRS is complementary, focusing on generating
enforceable normative language from goals and operational records. Adjacent LLM-based social agents are trained with adversarial objectives to improve interaction quality \citep{kong2025enhancing}.

\paragraph{Symbolic synthesis and constraints.}
Program synthesis and constraint solving show how discrete structures can be
searched under formal specifications \citep{solar2006combinatorial,
solar2008program}. GNRS-Search follows this spirit by
searching over a discrete rule structure before generating text. Unlike program
synthesis, however, the target is natural-language policy text, and the
constraints are partial grounding diagnostics over available records rather than
complete logical specifications.

\section{Preliminaries}
\label{sec:preliminaries}

We study normative text generation as a problem of producing language with an
operational interpretation. Let \(x=(g,C,D)\) denote a rule-writing context:
\(g\) is the normative intent, \(C\) is the institutional context schema
containing roles, locations, objects, and action vocabulary, and \(D\) is the
operational record layer containing event traces, logs, observations, and
evidence channels. The output is a natural-language rule \(y\). Unlike ordinary
open-ended text generation, the quality of \(y\) depends not only on fluency but
on whether it preserves the intent and can be operationalized against \(C\) and
\(D\).

\[
V(y;x)=(v_I,v_G,v_E,v_C,v_R).
\]

The components of \(V\) measure intent preservation, grounding, evidence use,
procedural closure, and readability. A scalar task score is any monotone
functional \(F(y;x)=u(V(y;x))\). We assume that a normative rule has a latent
operational structure \(a\) that mediates between intent and surface text. Let
\(K\) be the set of five slots: scope, trigger, norm, evidence, and procedure.
Then \(a=(a_k)_{k\in K}\). This structure is not the task output: the task
output remains natural language. It is instead a modeling assumption that lets
us reason about where grounding fails. A slot assignment is grounded when its
referenced actors, actions, evidence channels, and procedures can be interpreted
against \(C\) and \(D\).

Direct prompting estimates \(y\) in one step. Our view separates operational
interpretation from surface realization: construct a candidate space
\(\mathcal{A}(x)\), search for an operational structure \(a^*\), and then
realize it as text, where \(S\) is an internal score over structures and
\(R\) is a realization function:

\[
a^*=\arg\max_{a\in\mathcal{A}(x)} S(a;x),
\qquad
y^*=R(a^*,x).
\]
We keep \(S\) distinct from \(\seval\), the evaluation score over final language,
because a system should not win merely by optimizing the same metric used to
judge it.

The following proposition states the formal role of structure search. Let
\(G(a;x)\) be the task-validity score that a perfectly faithful realization of
structure \(a\) would obtain. Suppose \(S\) is calibrated to \(G\) within
\(\epsilon_C\), search returns an \(\epsilon_S\)-optimal structure under \(S\),
and realization loses at most \(\epsilon_R\) task validity.

\begin{proposition}
If \(|S(a;x)-G(a;x)|\leq \epsilon_C\) for all \(a\in\mathcal A(x)\),
\(S(\hat a;x)\geq \max_a S(a;x)-\epsilon_S\), and
\(F(R(\hat a,x);x)\geq G(\hat a;x)-\epsilon_R\), then
\[
F(R(\hat a,x);x)
\geq
\max_{a\in\mathcal A(x)}G(a;x)-\epsilon_S-2\epsilon_C-\epsilon_R .
\]
\end{proposition}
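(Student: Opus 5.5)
The plan is a telescoping chain of inequalities. It passes from the realized text back to the selected structure \(\hat a\), from \(\hat a\) to the \(S\)-optimum, and from there to the \(G\)-optimum. Each hypothesis of the proposition is used exactly once, except calibration, which is used twice. That double use is where the factor \(2\epsilon_C\) comes from.

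First, fix a maximizer \(a^\dagger\in\arg\max_{a\in\mathcal A(x)}G(a;x)\). If \(\mathcal A(x)\) is infinite and the maximum is not attained, I would instead work with a supremum and an \(\eta\)-near maximizer, then let \(\eta\to 0\). In the finite candidate space built in Section~\ref{sec:preliminaries} the maximum exists, so I will treat \(\mathcal A(x)\) as finite.

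Then I chain the following bounds, in order.
\begin{itemize}[leftmargin=*,noitemsep]
\item The realization hypothesis gives \(F(R(\hat a,x);x)\geq G(\hat a;x)-\epsilon_R\).
\item Calibration at \(\hat a\) gives \(G(\hat a;x)\geq S(\hat a;x)-\epsilon_C\).
\item Near-optimality of search gives \(S(\hat a;x)\geq \max_a S(a;x)-\epsilon_S\).
\item Since \(a^\dagger\in\mathcal A(x)\), we have \(\max_a S(a;x)\geq S(a^\dagger;x)\).
\item Calibration at \(a^\dagger\) gives \(S(a^\dagger;x)\geq G(a^\dagger;x)-\epsilon_C=\max_a G(a;x)-\epsilon_C\).
\end{itemize}
Substituting each bound into the previous one and summing the losses yields
\[
F(R(\hat a,x);x)\geq \max_{a\in\mathcal A(x)}G(a;x)-\epsilon_S-2\epsilon_C-\epsilon_R,
\]
which is the claimed inequality.

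There is no real obstacle here. The only points that need care are the following.
\begin{itemize}[leftmargin=*,noitemsep]
\item Calibration must be invoked at two different structures, \(\hat a\) and \(a^\dagger\), each in the appropriate direction of the absolute-value bound.
\item The comparison must go through \(\max_a S\) rather than comparing \(S(\hat a)\) directly with \(G(a^\dagger)\).
\item Existence of the maxima must be justified, which holds because \(\mathcal A(x)\) is finite.
\end{itemize}
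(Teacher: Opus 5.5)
Your proof is correct and follows the paper's argument: the paper also fixes \(a_G\in\arg\max_a G(a;x)\), chains calibration at \(\hat a\), search near-optimality, and calibration at \(a_G\) to obtain \(G(\hat a;x)\geq G(a_G;x)-\epsilon_S-2\epsilon_C\), and then applies the realization bound. The only difference is that you make explicit the step \(\max_a S(a;x)\geq S(a_G;x)\) and the existence of the maxima, which the paper leaves implicit; the statement's use of \(\max\) already presupposes existence, and finiteness of \(\mathcal A(x)\) comes from the product of slot pools in Section~\ref{sec:method} rather than from Section~\ref{sec:preliminaries}.
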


\begin{proof}
Let \(a_G\in\arg\max_a G(a;x)\). By calibration and search optimality,
\[
\begin{aligned}
G(\hat a;x)
&\geq S(\hat a;x)-\epsilon_C \\
&\geq S(a_G;x)-\epsilon_S-\epsilon_C \\
&\geq G(a_G;x)-\epsilon_S-2\epsilon_C .
\end{aligned}
\]
Combining this with the realization bound gives the result.
\end{proof}

\section{Evaluation Benchmarks}
\label{sec:benchmark}

We evaluate grounded rule synthesis using two complementary benchmarks. The primary testbed is \dataset{}, which provides a controlled simulation environment where available operational records are fully typed and systematically varied. To test transfer to real-derived policy tasks, we introduce RealCharter-Bench, a collection of evaluation scenarios derived directly from authentic institutional policies. Comprehensive domain distributions, record-layer vocabularies, and document source provenances are detailed in Appendix~\ref{app:benchmark-stats}.

\paragraph{\dataset{}.}
The benchmark evaluates whether a synthesis framework can write verifiable rules using only the information that an environment can actually log. Following the formalization in Section~\ref{sec:preliminaries}, each instance provides a target management goal alongside an institutional context schema and an operational record layer. For example, in an attendance scenario regulating restaurant shift coordination, the schema defines relevant roles and locations such as employees and service rooms, while the record layer exposes explicit data logs like check-in records. A valid synthesis method must anchor its generated scope, trigger, evidence, and procedure in these available logs instead of relying on unrecorded channels or subjective testimony.

The benchmark contains 116 goals distributed evenly across four functional categories including attendance, property boundaries, task execution, and mixed crisis objectives. We reserve 16 goals for development tuning and 88 goals for core in-distribution testing. To measure out-of-distribution performance, we hold out the entire \texttt{open\_space\_navigation} scene family as a separate 12-goal OOD split. Crucially, the underlying simulator distinguishes transient environmental observables from permanent evidence logs. This design ensures that generated rules can be evaluated after an interaction finishes by checking their compliance against durable data streams rather than fleeting environment states.

\paragraph{Evaluation protocol.}
The evaluation protocol decouples text generation from reference text imitation. During testing, a method receives the target management goal and the allowed context schema but is denied access to any human-authored golden rule. The framework must output a single natural-language rule clause whose semantic and programmatic commitments remain interpretable against the available environment logs. Final rule drafts are scored by a blinded multi-dimensional rubric to measure linguistic quality, while actual operational feasibility is verified via the binary \hardok{} gate over the underlying structural choices.

\paragraph{RealCharter-Bench.}
RealCharter-Bench provides an external check to ensure that synthesis frameworks generalize to authentic public policies. It comprises 53 tasks derived from 15 foundational open-source governance guidelines and corporate manuals such as the Python PSF and the GitLab Handbook. To prevent direct token copying, the original human-authored text is completely hidden from the generator. Instead, the dataset represents each task as a normalized policy card that exposes explicit fields for context, triggers, required conduct, and evidence requirements. This setup evaluates whether grounded structured search transfers effectively to real-derived policy tasks under a unified input-output abstraction; long-document understanding and policy extraction are outside the current scope.

\section{Method}
\label{sec:method}

The core design principle of \ours{} relies on separating surface prose generation from structural commitment synthesis. As shown in Figure~\ref{fig:pipeline}, instead of prompting a language model to output the final rule text \(y\) in an end-to-end pass, our framework first constructs grounded operational candidates for individual slots, executes a discrete optimization search over complete structural configurations \(a\), and realizes the optimal layout as natural language. This design enforces the scientific objective of anchoring free-text rule generation to an executable and auditable interpretation. We mathematically model the latent structure \(a\) as an And-Or Graph, abbreviated as AOG, which spans the five essential operational slots introduced in Section~\ref{sec:preliminaries} including scope, trigger, norm body, evidence policy, and procedure.

\begin{figure}[!htbp]
\centering
\resizebox{\linewidth}{!}{%
\begin{tikzpicture}[
  node distance=1.2cm,
  box/.style={draw, rounded corners, align=center, minimum width=2.6cm,
              minimum height=0.75cm, fill=gray!8},
  smallbox/.style={draw, rounded corners, align=center, minimum width=2.2cm,
              minimum height=0.65cm, fill=blue!5},
  arrow/.style={-{Latex[length=2mm]}, thick}
]
\node[box] (goal) {Atomic goal\\ \(g\)};
\node[box, right=of goal] (cand) {Candidate\\ generation};
\node[smallbox, above right=0.35cm and 0.55cm of cand] (rb) {rule-based};
\node[smallbox, below right=0.35cm and 0.55cm of cand] (llm) {context + records\\ + LLM};
\node[smallbox, right=of cand] (aog) {5-slot AOG\\ assembly};
\node[smallbox, right=of aog] (search) {MCMC search\\ with \(\ssearch\)};
\node[box, right=of search] (nl) {NL render\\ and polish};
\draw[arrow] (goal) -- (cand);
\draw[arrow] (cand) -- (aog);
\draw[arrow] (rb) -- (aog);
\draw[arrow] (llm) -- (aog);
\draw[arrow] (aog) -- (search);
\draw[arrow] (search) -- (nl);
\end{tikzpicture}}
\caption{System architecture of the \ours{} pipeline. Slotted candidate primitives derived from heterogeneous sources are assembled into a joint five-slot AOG structural space, optimized via an MCMC structural sampler against an internal record-grounded score, and rendered into verifiable normative prose.}
\label{fig:pipeline}
\end{figure}

\paragraph{Candidate construction.}
For each operational slot \(k\), our framework builds a localized candidate repository \(\mathcal{C}_k\) from four distinct structural sources comprising symbolic rule templates, context-schema primitives, trace-derived event records, and LLM-augmented proposals. The complete structural search space is defined as the Cartesian product of these sets, formulated as
\[
\mathcal{A}(x)=\prod_{k\in K}\mathcal{C}_k(x).
\]
While the LLM-augmented fragments are highly valuable for injecting context-aware vocabulary and natural phrasing into rule pieces, they undergo a strict validation filter against the environment schema and active record logs to minimize the introduction of ungrounded or imaginary payloads. Across the 88 main evaluation goals, the deterministic symbolic tracks contribute an average of 14.0 template-backed candidates, 9.3 schema-derived primitives, and 10.2 record-backed traces per goal. The generative language model component contributes 16.7 additional fragments per configuration while the grounding filter discards an average of 1.3 unsupported candidates per execution, maintaining a stable 92.5\% candidate retention rate.

\paragraph{Structured search.}
A concrete state \(a\in\mathcal{A}(x)\) corresponds to a fully populated five-slot AOG structural layout. To navigate this combinatorial optimization space, we implement a discrete Markov Chain Monte Carlo, or MCMC, structural sampler driven by four specialized proposal distributions including single-slot replacement, parameter-only tuning, evidence-policy swapping, and coordinated trigger-evidence structural edits. The evaluation function linearly combines structural grounding indicators and linguistic quality features, penalizing hard operational feasibility violations according to the scoring function
\[
\ssearch(a;x)=w^\top\phi(a,x)-\lambda\,\mathbf{1}[\neg h(a,x)].
\]
The descriptive feature vector \(\phi(a,x)\) computationally operationalizes multiple programmatic dimensions covering intent alignment, schema grounding, record track compatibility, procedural closure, and value consistency, while \(h(a,x)\) acts as a binary predicate flagging execution failures. Given a proposed structural state change from \(a\) to \(a'\), the sampler accepts the mutation with a standard Metropolis-Hastings probability formulated as \(\min(1,\exp((\ssearch(a';x)-\ssearch(a;x))/\tau))\). This stochastic accept-reject gate provides the search mechanism with sufficient exploration freedom to escape local optima while consistently steering the trajectory toward record-grounded configurations. Our evaluation layout utilizes 200 optimization steps per run with the temperature parameter fixed at \(\tau=0.08\).

The optimization objective \(\ssearch\) functions as a structural prior and is decoupled from the downstream validation metric. It reflects domain-agnostic properties that any verifiable organizational clause should satisfy, prioritizing trace-backed evidence channels over unrecorded claims and structurally closed enforcement flows over ambiguous procedures. The parameter weights are calibrated on the 16 development goals and remain entirely frozen before test-set execution. Crucially, no weight is optimized against the test scenarios or the automatic evaluator outputs. Quantitative verification details regarding alternative uniform weightings, the ablation of hard constraints, and a baseline comparison using a mathematical CP-SAT solver~\citep{perron2023cp} are detailed in Appendix~\ref{app:result-suite}, isolating this structural search bias from test-time metric overfitting.
A worked AOG schematic and construction-and-search trace are provided in
Figure~\ref{fig:aog-case-study}.

\begin{table*}[!htbp]
\centering
\footnotesize
\resizebox{\textwidth}{!}{%
\begin{tabular}{lrrrrrrrrr}
\toprule
\textbf{Method} & \textbf{Spec} & \textbf{Exec} &
\textbf{Flu} & \textbf{Read} & \textbf{Faith} & \textbf{Avg} &
\textbf{v2 Avg} & \textbf{\hardok{}} & \textbf{\overall{}} \\
\midrule
Direct Prompting & 60.4 & 61.7 & 83.1 & \best{79.8} & 59.2 & 68.8 & 64.9 & 0.0 & 55.1 \\
Structured Outputs & 72.7 & 53.8 & 54.1 & 47.7 & 68.4 & 59.3 & 68.8 & 4.2 & 48.3 \\
Best-of-\(N\) & 72.0 & 52.7 & 54.5 & 47.4 & 68.7 & 59.1 & 69.0 & 6.1 & 48.5 \\
Self-Refine & 72.7 & 55.4 & 50.2 & 46.4 & 68.3 & 58.6 & 68.6 & 3.8 & 47.6 \\
In-Context Grounding & 70.2 & 51.9 & 59.8 & 49.9 & 67.8 & 59.9 & 69.7 & 0.0 & 48.0 \\
Norm Synthesis & 73.8 & 55.3 & 63.9 & 54.5 & 68.4 & 63.2 & 71.3 & 2.7 & 51.1 \\
No record grounding & 75.0 & 56.9 & 64.4 & 55.2 & 69.1 & 64.1 & 71.4 & 74.6 & 66.2 \\
No search & 74.0 & 56.7 & 64.5 & 56.1 & 67.0 & 63.7 & 71.7 & 46.2 & 60.2 \\
No LLM candidates & 75.6 & 57.7 & 74.4 & 60.5 & 68.1 & 67.2 & 70.5 & \best{77.3} & 69.2 \\
\textbf{\ours{}} & \best{80.2} & \best{73.3} &
\best{92.5} & 79.5 & \best{79.2} & \best{81.0} & \best{86.4} & 51.1 & \best{75.0} \\
\bottomrule
\end{tabular}%
}
\caption{Main test results on 88 goals \(\times\) 3 seeds. Rubric columns are
GPT-5.1 judgments on a 1--5 scale, reported as percentages. \textbf{Avg}
denotes Rubric Avg (v1), the equal-weight mean of the five rubric dimensions;
v2 Avg is a stricter diagnostic rubric that penalizes generic evidence
vocabulary. \hardok{} is a binary AOG feasibility diagnostic. \overall{} is the
disclosed executable composite \(0.80 \times\) Avg \(+ 0.20 \times\) \hardok{}.}
\label{tab:main}
\end{table*}

\paragraph{Realization and evaluator separation.}
The optimized latent graph layout is processed by a surface realization function \(R\) into an initial rule clause formulated as \(y=R(a^*)\), which is subsequently refined through a payload-preserving text polish workflow. A programmatic alignment guard monitors this editing phase, automatically rejecting any sentence rewrites that omit essential structural entities or discard required evidence logs. Crucially, the internal search scorer \(\ssearch\) is strictly isolated from the final performance evaluation setup. We report our headline achievements using \(\seval\), an independent multi-dimensional rubric score computed over the polished natural language text, alongside \hardok{}, a binary execution feasibility checker tracking true operational grounding within the environment records. This systematic boundary ensures that performance enhancements are driven by genuine structural grounding rather than superficial convergence toward the evaluation metric.

\section{Results and Analysis}
\label{sec:experiments}

\subsection{Evaluation Setup}
The in-distribution evaluation suite uses 88 distinct goals across 3 random seeds. Baselines encompass direct prompting, structured outputs~\citep{willard2023efficient}, best-of-\(N\) (\(N{=}8\)) reranking~\citep{nakano2021webgpt}, self-refinement~\citep{madaan2023self}, retrieval-style in-context grounding~\citep{lewis2020retrieval}, and a single-shot CRSEC norm-synthesis adaptation~\citep{ren2024emergence}, while pipeline ablations examine the removal of record grounding, structural search, and language model candidates. The CRSEC-style baseline adapts only the norm-creation and representation prompt. The primary scoring instrument is a blinded GPT-5.1 rubric evaluating five dimensions including specificity, executability, fluency, readability, and faithfulness. We report \textbf{Rubric Avg (v1)}, the equal-weight arithmetic mean of these five dimensions, Rubric Avg~(v1) = (Spec + Exec + Flu + Read + Faith)/5, as the main text-quality metric, and a stricter diagnostic \textbf{Rubric Avg (v2)} that additionally penalizes generic (non-specific) evidence references. We additionally report \hardok{} as a binary feasibility diagnostic over the selected AOG layout, alongside a disclosed executable composite score defined as \overall{} = 0.80 $\times$ Rubric Avg~(v1) + 0.20 $\times$ \hardok{}. Metric-weight sensitivity is reported in Appendix~\ref{app:metric-sensitivity}; human calibration and a second-model rubric check are reported in Appendix~\ref{app:full-results}, Tables~\ref{tab:human-calibration-app}--\ref{tab:second-evaluator-app}; rubric prompts are listed in Appendix~\ref{app:prompts}.

\subsection{Main Benchmark Results}
\label{sec:main-results}
Table~\ref{tab:main} shows the head-to-head performance on the main test split. \ours{} improves Rubric Avg (v1) to \pct{81.0}, achieving a notable margin over both direct prompting and the closest ablation framework. The primary source of this improvement stems from the executability dimension, where our approach raises the average performance to \pct{73.3}. Figure~\ref{fig:tradeoff-robustness} situates this on the Rubric-Avg-(v1) vs.\ \hardok{} plane. Under the standard Pareto definition, both full \ours{} and its No-LLM-candidates ablation are nondominated: full \ours{} attains the highest reader-facing quality, while No LLM candidates attains a higher \hardok{}. The No-LLM-candidates variant is an ablation of our method rather than an independent competitor---it retains the five-slot structure, record grounding, hard feasibility checking, and structured search---so its strong feasibility further supports the core framework; adding LLM candidates expands the candidate space and improves text quality at a different quality-feasibility balance. Our claim is therefore that full \ours{} achieves the highest score under the disclosed \overall{} composite (0.80 $\times$ quality + 0.20 $\times$ feasibility), not that it has the highest \hardok{}. Paired goal-level bootstrap analysis confirms that this composite ranking remains stable across alternative weights (full details in Appendix~\ref{app:metric-sensitivity}); shifting the weighting heavily toward the binary gate does reduce \hardok{}, confirming a genuine quality-feasibility tradeoff. Furthermore, our method-blinded human calibration study indicates strong directional alignment with the automatic scorer, showing a row-level Spearman correlation of 0.77 and yielding a positive human score margin of 0.69 over non-search alternatives.

\begin{table}[!htbp]
\centering
\footnotesize
\setlength{\tabcolsep}{4pt}
\begin{tabular}{lrrr}
\toprule
\textbf{Ablation} & \textbf{Avg} & \(\Delta\) & \textbf{\hardok{}} \\
\midrule
\ours{} full & 81.0 & 0.00 & 51.1 \\
Uniform weights & 80.4 & -0.53 & 60.6 \\
Random candidates & 79.9 & -1.03 & 36.0 \\
No hard penalty & 80.1 & -0.86 & 53.8 \\
100 search steps & 80.2 & -0.77 & 47.0 \\
AND-only evidence & 80.6 & -0.35 & 52.7 \\
CP-SAT control & 64.8 & -16.18 & 60.6 \\
\bottomrule
\end{tabular}
\caption{Ablations on \ours{}. \(\Delta\) is the Avg difference. All rows run on the same 88 goals and three seeds; CP-SAT uses the LLM-augmented candidate inventory with an OR-Tools backend.}
\label{tab:ablation}
\end{table}

\begin{figure*}[t]
\centering
\includegraphics[width=0.95\linewidth]{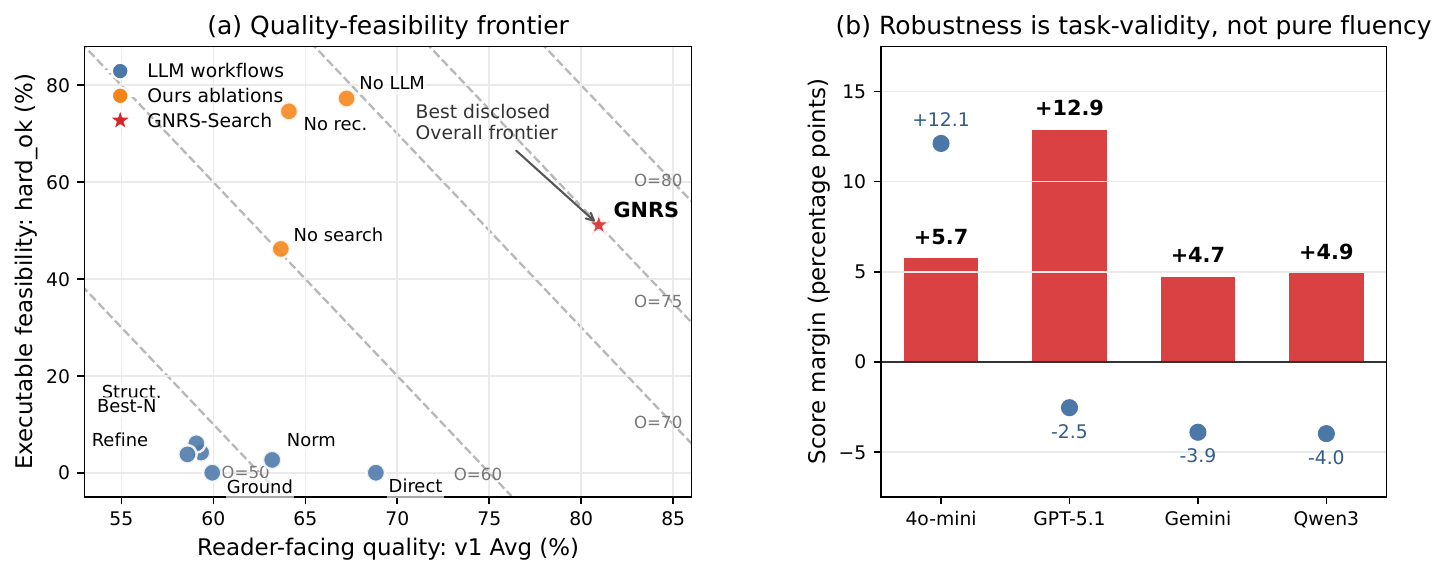}
\caption{Analytical view of the headline results. Left: Rubric Avg (v1) and \hardok{} form a quality-feasibility plane; dashed contours show the disclosed \overall{} composite, and points are labeled with abbreviated method names. Both full \ours{} and its No-LLM-candidates ablation are Pareto-optimal operating points: full \ours{} attains the highest reader-facing quality, while No LLM candidates attains higher \hardok{}. \ours{} achieves the highest score under the disclosed \overall{} composite, not the highest \hardok{}. Right: red bars show \ours{}'s executable-\overall{} margin over the best non-\ours{} method, while blue dots show \ours{} minus Direct Prompting on pure Rubric Avg (v1).}
\label{fig:tradeoff-robustness}
\end{figure*}

\subsection{Mechanism and Sensitivity}
\label{sec:ablation}

As shown in Table~\ref{tab:ablation}, removing individual algorithmic components degrades synthesis quality. Random candidate selection causes the largest absolute performance drop, which confirms that candidate ranking characteristics directly impact framework execution. Weakening the search space to 100 steps or removing the hard feasibility penalty also reduces overall performance, suggesting that the task rewards systematic structured decomposition over localized search heuristics. Crucially, the mathematical CP-SAT solver baseline performs substantially below our full framework on both average quality and composite metrics, demonstrating that the performance advantage cannot be replicated by simply replacing our sampling policy with an off-the-shelf discrete optimizer.

\begin{figure}[htbp]
\centering
\includegraphics[width=\linewidth]{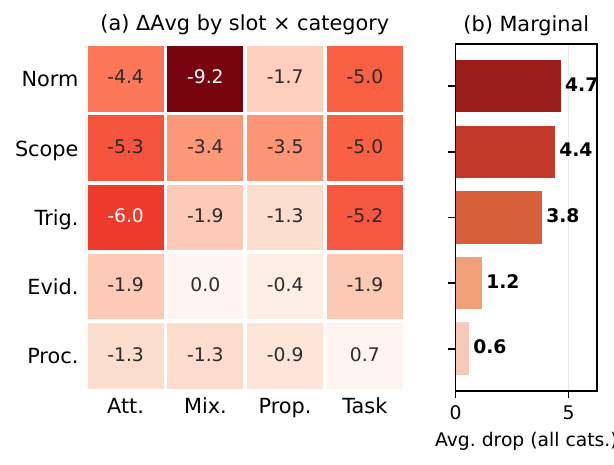}
\caption{Slot sensitivity. Each cell is the Avg change after replacing one slot's candidate pool with distractors from other goals.}
\label{fig:slot-sensitivity}
\end{figure}

To locate where synthesized rules remain most vulnerable, our slot sensitivity intervention degrades candidate sets one slot at a time using cross-goal distractors. Figure~\ref{fig:slot-sensitivity} rejects the initial hypothesis that evidence aggregation details would dictate overall fragility. Instead, quality loss concentrates heavily within the semantic core of the rule text, where degrading the norm body slot causes the largest average performance drop, followed closely by scope and trigger modifications. Conversely, procedure slots and evidence policy formulations remain highly resilient. Static pool audits confirm that this stability is not an artifact of small distractor sets, indicating that downstream verification properties are often structurally recoverable from environmental observables once the core compliance duties are correctly specified.

\subsection{Robustness and Real-Derived Policy Transfer}
\label{sec:robustness}

We evaluate generalization capabilities across a held-out scene family and an external cross-model generator grid. In the out-of-distribution experiment holding out open-space navigation goals, Table~\ref{tab:ood} confirms that \ours{} maintains its performance advantage by scoring \pct{79.8} on average quality and \pct{82.7} on the joint composite metric. Cross-model robustness evaluations summarized in Table~\ref{tab:crossmodel} show that our search framework ranks first under the composite task-validity metric across all tested commercial and open-source backbones. While open-ended direct prompting can achieve competitive surface text fluency on stronger model generators, it consistently yields zero binary feasibility scores due to its tendency to produce ungrounded generic clauses. This confirms that grounding-aware structural search remains necessary to satisfy joint executability requirements.

\paragraph{Human executability audit.}
To complement the programmatic \hardok{} check, we conduct a human executability audit on 48 tasks with three annotators. Under this audit, No LLM candidates achieves the highest strict executability rate at \pct{81.3}, while full \ours{} reaches \pct{75.0} and receives the highest overall human preference at \pct{68.8} because of its stronger completeness and rule quality. \hardok{} shows a per-task agreement of 0.64 with human executability judgments. These results support our claim that full \ours{} offers the best overall balance rather than the highest standalone feasibility, and corroborate that \hardok{} tracks human feasibility judgments to a moderate degree.

\paragraph{Robustness to schema noise.}
We further evaluate \ours{} under degraded context schemas: irrelevant noise causes a 2.1-point Rubric Avg (v1) drop, moderate missing information causes a 5.4-point drop, and incorrectly specified schemas cause a larger 9.7-point drop. The method is thus relatively stable under irrelevant noise and partial information, but sensitive to incorrect operational inputs, which suggests that schema validation is an important deployment requirement (Appendix~\ref{app:full-results}).

\begin{table}[!htbp]
\centering
\footnotesize
\setlength{\tabcolsep}{4pt}
\resizebox{\columnwidth}{!}{%
\begin{tabular}{lrrrr}
\toprule
\textbf{Method} & \textbf{Avg} & \textbf{\hardok{}} & \textbf{\overall{}} & \(\ssearch\) \\
\midrule
Direct & 65.6 & 0.0 & 52.4 & 0.546 \\
Structured & 58.0 & 0.0 & 46.4 & 0.805 \\
Best-of-\(N\) & 58.1 & 0.0 & 46.5 & 0.814 \\
Self-Refine & 58.2 & 0.0 & 46.6 & 0.806 \\
Grounding & 52.7 & 0.0 & 42.1 & 0.803 \\
Norm Synth. & 57.8 & 0.0 & 46.2 & 0.790 \\
No record & 57.7 & \best{100.0} & 66.1 & 0.995 \\
No search & 56.7 & 86.1 & 62.6 & 0.888 \\
No LLM & 60.9 & \best{100.0} & 68.7 & 0.917 \\
\textbf{\ours{}} & \best{79.8} & 94.4 & \best{82.7} & 0.904 \\
\bottomrule
\end{tabular}}
\caption{OOD results on the held-out open-space navigation family (12 goals \(\times\) 3 seeds). \overall{} uses the same disclosed \(0.80/0.20\) weighting as Table~\ref{tab:main}. Method names are abbreviated in the table body for width.}
\label{tab:ood}
\end{table}

\begin{table}[!htbp]
\centering
\footnotesize
\setlength{\tabcolsep}{4pt}
\resizebox{\columnwidth}{!}{%
\begin{tabular}{lrrrrr}
\toprule
\textbf{Generator} & \textbf{Direct} & \textbf{Refine} & \textbf{No LLM} & \textbf{\ours{}} & \(\Delta\) \\
\midrule
GPT-4o-mini & 55.1 & 47.6 & 69.2 & \best{75.0} & +5.7 \\
GPT-5.1 & 67.3 & 66.2 & 69.1 & \best{82.0} & +12.9 \\
Gemini Flash Lite & 61.6 & 51.7 & 69.0 & \best{73.7} & +4.7 \\
Qwen3-235B & 63.1 & 54.9 & 69.2 & \best{74.1} & +4.9 \\
\bottomrule
\end{tabular}}
\caption{Cross-model executable \overall{} with a fixed GPT-5.1 scorer. \(\Delta\) is \ours{} minus the best non-\ours{} row for that generator.}
\label{tab:crossmodel}
\end{table}

External evaluation on RealCharter-Bench indicates that our structured optimization translates effectively to authentic public policies. As reported in Table~\ref{tab:realcharter}, \ours{} achieves an average text quality score of \pct{93.4} alongside a perfect feasibility rate, leading the strongest iterative self-refinement baseline. Figure~\ref{fig:external-transfer} details this relationship across the quality-feasibility plane, demonstrating that the external transfer performance gains are driven by structured search optimization rather than template formatting effects. Crucially, the perfect feasibility alignment reflects the explicit enumeration of target slots within normalized task cards, which reduces the likelihood of introducing unsupported environment payloads compared to open-ended simulation settings. 

\begin{table}[htbp]
\centering
\footnotesize
\setlength{\tabcolsep}{4pt}
\begin{tabular}{lrrrr}
\toprule
\textbf{Method} & \textbf{n} & \textbf{Faith} & \textbf{\hardok{}} & \textbf{\overall{}} \\
\midrule
Direct & 53 & 4.64 & 62.26 & 82.26 \\
Self-Refine & 53 & 4.77 & 77.36 & 86.09 \\
No LLM & 53 & 4.50 & 47.17 & 76.07 \\
No search & 53 & 2.68 & 0.00 & 50.56 \\
\textbf{\ours{}} & 53 & \best{5.00} & \best{100.00} & \best{94.69} \\
\bottomrule
\end{tabular}
\caption{RealCharter-Bench results. Faith is the 1--5 task-faithfulness score; \hardok{} and \overall{} are percentages. All methods receive the same normalized task card, while the source clause text is hidden.}
\label{tab:realcharter}
\end{table}

In summary, our empirical analysis yields three clear findings regarding verifiable rule synthesis. First, fluent open-ended prompting remains an unreliable indicator of actual compliance grounding due to hidden verification failures. Second, enforcing syntactic formats via schema constraints or rhetorical self-refinement is insufficient without grounding intermediate components against empirical logs. Third, the localized slot sensitivity results demonstrate that the core bottleneck in policy synthesis resides within the interlocking definitions of norm bodies, scopes, and triggers, rather than downstream procedural closures.

\begin{figure}[htbp]
\centering
\includegraphics[width=\linewidth]{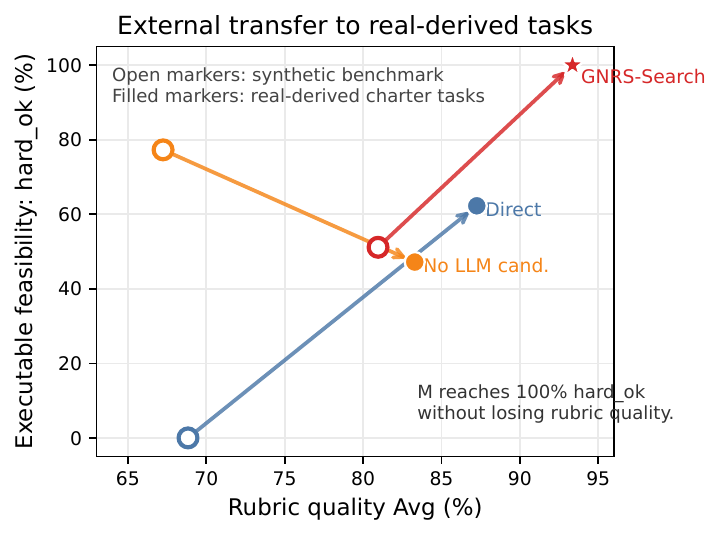}
\caption{External transfer from the synthetic benchmark to real-derived charter tasks for three anchor methods. Open markers show the main synthetic benchmark; filled markers show the 53 real-derived tasks.}
\label{fig:external-transfer}
\end{figure}

\section{Conclusion}

We formalize Grounded Normative Rule Synthesis alongside two evaluation benchmarks comprising the controlled \dataset{} and the real-derived RealCharter-Bench. To address the operational grounding bottleneck, we introduce \ours{}, an optimization framework that decouples latent structural commitments from surface prose generation through discrete MCMC sampling over an AOG layout. Empirical evaluations demonstrate that this paradigm outperforms traditional direct prompting and iterative self-refinement baselines across varying generator backbones and out-of-distribution settings. Systematic sensitivity audits further confirm that these performance gains come from securing the semantic core of the policies rather than optimizing superficial textual phrasing. This work establishes a verifiable framework for deploying compliance-ready and auditable personal agents within regulated environments.

\section*{Limitations}

\dataset{} is intentionally controlled: it evaluates grounded rule synthesis
against a fixed context schema and does not cover the full diversity of
institutional policies, legal domains, languages, or deployment environments.
RealCharter-Bench broadens the evaluation to public policy materials, but its
inputs are normalized policy cards rather than complete source documents, so it
does not test long-document interpretation or end-to-end policy extraction. Our
rubric scores, second-model audit, cross-model generator grid, and
method-blinded human calibration provide robustness checks, but they are not a
substitute for a broad evaluator-family study or domain-expert legal review.
Finally, GNRS-Search assumes that the relevant operational vocabulary and
evidence interfaces are available before generation. In many target
environments, this information already exists in system assets such as database
schemas, audit logs, access-control models, and workflow definitions; the context
schema and record layer can therefore be mapped from these existing assets rather
than authored from scratch by ordinary users. This preparation is mainly a
one-time environment-level cost that can subsequently support many
rule-generation tasks, whereas manually drafting grounded rules usually requires
repeated effort for each individual rule. Real deployments would still require
independent validation of the task cards, evidence interfaces, generated clauses,
and enforcement procedures, particularly when the provided schemas are
incorrectly specified, since our robustness analysis (Appendix~\ref{app:full-results}) shows that the method is relatively stable under irrelevant noise and partial missing information but sensitive to incorrect operational inputs.

\section*{AI Assistant Use}

All research ideas, experimental designs, benchmark construction, evaluations,
analyses, claims, and conclusions were developed and conducted by the authors.
AI assistants were used only for language polishing and editing support.

\bibliography{custom}

\begin{thebibliography}{45}
\providecommand{\natexlab}[1]{#1}

\bibitem[{Boella et~al.(2006)Boella, Van Der~Torre, and
  Verhagen}]{boella2006introduction}
Guido Boella, Leendert Van Der~Torre, and Harko Verhagen. 2006.
\newblock Introduction to normative multiagent systems.
\newblock \emph{Computational \& Mathematical Organization Theory},
  12(2):71--79.

\bibitem[{Cobbe et~al.(2021)Cobbe, Kosaraju, Bavarian, Chen, Jun, Kaiser,
  Plappert, Tworek, Hilton, Nakano et~al.}]{cobbe2021training}
Karl Cobbe, Vineet Kosaraju, Mohammad Bavarian, Mark Chen, Heewoo Jun, Lukasz
  Kaiser, Matthias Plappert, Jerry Tworek, Jacob Hilton, Reiichiro Nakano, and
  1 others. 2021.
\newblock Training verifiers to solve math word problems.
\newblock \emph{arXiv preprint arXiv:2110.14168}.

\bibitem[{Dong et~al.(2025)Dong, Ruan, Cai, Xu, Zhao, Lai, and
  Chen}]{dong2025xgrammar}
Yixin Dong, Charlie~F Ruan, Yaxing Cai, Ziyi Xu, Yilong Zhao, Ruihang Lai, and
  Tianqi Chen. 2025.
\newblock Xgrammar: Flexible and efficient structured generation engine for
  large language models.
\newblock \emph{Proceedings of Machine Learning and Systems}, 7.

\bibitem[{Gardent et~al.(2017)Gardent, Shimorina, Narayan, and
  Perez-Beltrachini}]{gardent2017webnlg}
Claire Gardent, Anastasia Shimorina, Shashi Narayan, and Laura
  Perez-Beltrachini. 2017.
\newblock The webnlg challenge: Generating text from rdf data.
\newblock In \emph{Proceedings of the 10th international conference on natural
  language generation}, pages 124--133.

\bibitem[{Governatori and Rotolo(2010)}]{governatori2010conceptually}
Guido Governatori and Antonino Rotolo. 2010.
\newblock A conceptually rich model of business process compliance.
\newblock In \emph{Proceedings of the Seventh Asia-Pacific Conference on
  Conceptual Modelling-Volume 110}, pages 3--12.

\bibitem[{Guha et~al.(2023)Guha, Nyarko, Ho, R{\'e}, Chilton, Chohlas-Wood,
  Peters, Waldon, Rockmore, Zambrano et~al.}]{guha2023legalbench}
Neel Guha, Julian Nyarko, Daniel Ho, Christopher R{\'e}, Adam Chilton, Alex
  Chohlas-Wood, Austin Peters, Brandon Waldon, Daniel Rockmore, Diego Zambrano,
  and 1 others. 2023.
\newblock Legalbench: A collaboratively built benchmark for measuring legal
  reasoning in large language models.
\newblock \emph{Advances in neural information processing systems},
  36:44123--44279.

\bibitem[{Hastings(1970)}]{hastings1970monte}
W~Keith Hastings. 1970.
\newblock Monte carlo sampling methods using markov chains and their
  applications.

\bibitem[{Hendrycks et~al.(2021)Hendrycks, Burns, Chen, and
  Ball}]{hendrycks2021cuad}
Dan Hendrycks, Collin Burns, Anya Chen, and Spencer Ball. 2021.
\newblock Cuad: An expert-annotated nlp dataset for legal contract review.
\newblock \emph{arXiv preprint arXiv:2103.06268}.

\bibitem[{Holzenberger et~al.(2020)Holzenberger, Blair-Stanek, and
  Van~Durme}]{holzenberger2020dataset}
Nils Holzenberger, Andrew Blair-Stanek, and Benjamin Van~Durme. 2020.
\newblock A dataset for statutory reasoning in tax law entailment and question
  answering.
\newblock \emph{arXiv preprint arXiv:2005.05257}.

\bibitem[{Huang et~al.(2024)Huang, Wang, Liu, Kong, Qin, Tang, Zhu, Bi, Qi, and
  Feng}]{huang2024adasociety}
Yizhe Huang, Xingbo Wang, Hao Liu, Fanqi Kong, Aoyang Qin, Min Tang, Song-Chun
  Zhu, Mingjie Bi, Siyuan Qi, and Xue Feng. 2024.
\newblock Adasociety: An adaptive environment with social structures for
  multi-agent decision-making.
\newblock \emph{Advances in Neural Information Processing Systems},
  37:35388--35413.

\bibitem[{Ji et~al.(2023)Ji, Lee, Frieske, Yu, Su, Xu, Ishii, Bang, Madotto,
  and Fung}]{ji2023survey}
Ziwei Ji, Nayeon Lee, Rita Frieske, Tiezheng Yu, Dan Su, Yan Xu, Etsuko Ishii,
  Ye~Jin Bang, Andrea Madotto, and Pascale Fung. 2023.
\newblock Survey of hallucination in natural language generation.
\newblock \emph{ACM computing surveys}, 55(12):1--38.

\bibitem[{Jiayang et~al.(2025)Jiayang, Zhuang, Li, Chan, Liu, Qiu, and
  Song}]{jiayang2025integround}
Cheng Jiayang, Qianqian Zhuang, Haoran Li, Chunkit Chan, Xin Liu, Lin Qiu, and
  Yangqiu Song. 2025.
\newblock Integround: On the evaluation of verification and retrieval planning
  in integrative grounding.
\newblock In \emph{Findings of the Association for Computational Linguistics:
  EMNLP 2025}, pages 13587--13602.

\bibitem[{Kong et~al.(2024)Kong, Huang, Zhu, Qi, and Feng}]{kong2024learning}
Fanqi Kong, Yizhe Huang, Song-Chun Zhu, Siyuan Qi, and Xue Feng. 2024.
\newblock Learning to balance altruism and self-interest based on empathy in
  mixed-motive games.
\newblock \emph{Advances in Neural Information Processing Systems},
  37:135819--135842.

\bibitem[{Kong et~al.(2026{\natexlab{a}})Kong, Zhang, Yin, Zhang, Zhang, Chen,
  Zhang, Zhang, Zhu, and Feng}]{kong2026aegis}
Fanqi Kong, Ruijie Zhang, Huaxiao Yin, Guibin Zhang, Xiaofei Zhang, Ziang Chen,
  Zhaowei Zhang, Xiaoyuan Zhang, Song-Chun Zhu, and Xue Feng.
  2026{\natexlab{a}}.
\newblock Aegis: Automated error generation and attribution for multi-agent
  systems.
\newblock In \emph{International Conference on Learning Representations},
  volume 2026, pages 52994--53028.

\bibitem[{Kong et~al.(2025)Kong, Zhang, Chen, Yang, Zhu, and
  Feng}]{kong2025enhancing}
Fanqi Kong, Xiaoyuan Zhang, Xinyu Chen, Yaodong Yang, Song-Chun Zhu, and Xue
  Feng. 2025.
\newblock Enhancing llm-based social bot via an adversarial learning framework.
\newblock In \emph{Proceedings of the 2025 Conference on Empirical Methods in
  Natural Language Processing}, pages 23246--23271.

\bibitem[{Kong et~al.(2026{\natexlab{b}})Kong, Zu, Chen, Yang, Zhu, and
  Feng}]{kong2026siv}
Fanqi Kong, Weiqin Zu, Xinyu Chen, Yaodong Yang, Song-Chun Zhu, and Xue Feng.
  2026{\natexlab{b}}.
\newblock Siv-bench: A video benchmark for social interaction understanding and
  reasoning.
\newblock In \emph{Findings of the Association for Computational Linguistics:
  ACL 2026}, pages 37379--37403.

\bibitem[{Lewis et~al.(2020)Lewis, Perez, Piktus, Petroni, Karpukhin, Goyal,
  K{\"u}ttler, Lewis, Yih, Rockt{\"a}schel et~al.}]{lewis2020retrieval}
Patrick Lewis, Ethan Perez, Aleksandra Piktus, Fabio Petroni, Vladimir
  Karpukhin, Naman Goyal, Heinrich K{\"u}ttler, Mike Lewis, Wen-tau Yih, Tim
  Rockt{\"a}schel, and 1 others. 2020.
\newblock Retrieval-augmented generation for knowledge-intensive nlp tasks.
\newblock \emph{Advances in neural information processing systems},
  33:9459--9474.

\bibitem[{Liu et~al.(2024)Liu, Yu, Zhang, Xu, Lei, Lai, Gu, Ding, Men, Yang
  et~al.}]{liu2024agentbench}
Xiao Liu, Hao Yu, Hanchen Zhang, Yifan Xu, Xuanyu Lei, Hanyu Lai, Yu~Gu,
  Hangliang Ding, Kaiwen Men, Kejuan Yang, and 1 others. 2024.
\newblock Agentbench: Evaluating llms as agents.
\newblock In \emph{International Conference on Learning Representations},
  volume 2024, pages 52989--53046.

\bibitem[{Lu et~al.(2025)Lu, Li, Cong, Zhang, Wu, Lin, Liu, Liu, and
  Sun}]{lu2025learning}
Yaxi Lu, Haolun Li, Xin Cong, Zhong Zhang, Yesai Wu, Yankai Lin, Zhiyuan Liu,
  Fangming Liu, and Maosong Sun. 2025.
\newblock Learning to generate structured output with schema reinforcement
  learning.
\newblock In \emph{Proceedings of the 63rd Annual Meeting of the Association
  for Computational Linguistics (Volume 1: Long Papers)}, pages 4905--4918.

\bibitem[{Madaan et~al.(2023)Madaan, Tandon, Gupta, Hallinan, Gao, Wiegreffe,
  Alon, Dziri, Prabhumoye, Yang et~al.}]{madaan2023self}
Aman Madaan, Niket Tandon, Prakhar Gupta, Skyler Hallinan, Luyu Gao, Sarah
  Wiegreffe, Uri Alon, Nouha Dziri, Shrimai Prabhumoye, Yiming Yang, and 1
  others. 2023.
\newblock Self-refine: Iterative refinement with self-feedback.
\newblock \emph{Advances in neural information processing systems},
  36:46534--46594.

\bibitem[{Maynez et~al.(2020)Maynez, Narayan, Bohnet, and
  McDonald}]{maynez2020faithfulness}
Joshua Maynez, Shashi Narayan, Bernd Bohnet, and Ryan McDonald. 2020.
\newblock On faithfulness and factuality in abstractive summarization.
\newblock In \emph{Proceedings of the 58th annual meeting of the association
  for computational linguistics}, pages 1906--1919.

\bibitem[{Nakano et~al.(2021)Nakano, Hilton, Balaji, Wu, Ouyang, Kim, Hesse,
  Jain, Kosaraju, Saunders et~al.}]{nakano2021webgpt}
Reiichiro Nakano, Jacob Hilton, Suchir Balaji, Jeff Wu, Long Ouyang, Christina
  Kim, Christopher Hesse, Shantanu Jain, Vineet Kosaraju, William Saunders, and
  1 others. 2021.
\newblock Webgpt: Browser-assisted question-answering with human feedback.
\newblock \emph{arXiv preprint arXiv:2112.09332}.

\bibitem[{Nan et~al.(2021)Nan, Radev, Zhang, Rau, Sivaprasad, Hsieh, Tang,
  Vyas, Verma, Krishna et~al.}]{nan2021dart}
Linyong Nan, Dragomir Radev, Rui Zhang, Amrit Rau, Abhinand Sivaprasad,
  Chiachun Hsieh, Xiangru Tang, Aadit Vyas, Neha Verma, Pranav Krishna, and 1
  others. 2021.
\newblock Dart: Open-domain structured data record to text generation.
\newblock In \emph{Proceedings of the 2021 Conference of the North American
  Chapter of the Association for Computational Linguistics: Human Language
  Technologies}, pages 432--447.

\bibitem[{{OpenAI}(2024)}]{openai2024structured}
{OpenAI}. 2024.
\newblock Introducing {S}tructured {O}utputs in the {API}.
\newblock Blog post, August 2024.

\bibitem[{Parikh et~al.(2020)Parikh, Wang, Gehrmann, Faruqui, Dhingra, Yang,
  and Das}]{parikh2020totto}
Ankur Parikh, Xuezhi Wang, Sebastian Gehrmann, Manaal Faruqui, Bhuwan Dhingra,
  Diyi Yang, and Dipanjan Das. 2020.
\newblock Totto: A controlled table-to-text generation dataset.
\newblock In \emph{Proceedings of the 2020 Conference on Empirical Methods in
  Natural Language Processing (EMNLP)}, pages 1173--1186.

\bibitem[{Park et~al.(2023)Park, O'Brien, Cai, Morris, Liang, and
  Bernstein}]{park2023generative}
Joon~Sung Park, Joseph O'Brien, Carrie~Jun Cai, Meredith~Ringel Morris, Percy
  Liang, and Michael~S Bernstein. 2023.
\newblock Generative agents: Interactive simulacra of human behavior.
\newblock In \emph{Proceedings of the 36th annual acm symposium on user
  interface software and technology}, pages 1--22.

\bibitem[{Perron et~al.(2023)Perron, Didier, and Gay}]{perron2023cp}
Laurent Perron, Fr{\'e}d{\'e}ric Didier, and Steven Gay. 2023.
\newblock The cp-sat-lp solver (invited talk).
\newblock In \emph{29th International Conference on Principles and Practice of
  Constraint Programming (CP 2023)}, pages 3--1. Schloss
  Dagstuhl--Leibniz-Zentrum f{\"u}r Informatik.

\bibitem[{Ren et~al.(2024)Ren, Cui, Song, Wang, and Hu}]{ren2024emergence}
Siyue Ren, Zhiyao Cui, Ruiqi Song, Zhen Wang, and Shuyue Hu. 2024.
\newblock Emergence of social norms in generative agent societies: principles
  and architecture.
\newblock \emph{arXiv preprint arXiv:2403.08251}.

\bibitem[{Scholak et~al.(2021)Scholak, Schucher, and
  Bahdanau}]{scholak2021picard}
Torsten Scholak, Nathan Schucher, and Dzmitry Bahdanau. 2021.
\newblock Picard: Parsing incrementally for constrained auto-regressive
  decoding from language models.
\newblock In \emph{Proceedings of the 2021 conference on empirical methods in
  natural language processing}, pages 9895--9901.

\bibitem[{Sergot et~al.(1986)Sergot, Sadri, Kowalski, Kriwaczek, Hammond, and
  Cory}]{sergot1986british}
Marek~J. Sergot, Fariba Sadri, Robert~A. Kowalski, Frank Kriwaczek, Peter
  Hammond, and H~Terese Cory. 1986.
\newblock The british nationality act as a logic program.
\newblock \emph{Communications of the ACM}, 29(5):370--386.

\bibitem[{Shinn et~al.(2023)Shinn, Cassano, Gopinath, Narasimhan, and
  Yao}]{shinn2023reflexion}
Noah Shinn, Federico Cassano, Ashwin Gopinath, Karthik Narasimhan, and Shunyu
  Yao. 2023.
\newblock Reflexion: Language agents with verbal reinforcement learning.
\newblock \emph{Advances in neural information processing systems},
  36:8634--8652.

\bibitem[{Shridhar et~al.(2020)Shridhar, Yuan, C{\^o}t{\'e}, Bisk, Trischler,
  and Hausknecht}]{shridhar2020alfworld}
Mohit Shridhar, Xingdi Yuan, Marc-Alexandre C{\^o}t{\'e}, Yonatan Bisk, Adam
  Trischler, and Matthew Hausknecht. 2020.
\newblock Alfworld: Aligning text and embodied environments for interactive
  learning.
\newblock \emph{arXiv preprint arXiv:2010.03768}.

\bibitem[{Solar-Lezama(2008)}]{solar2008program}
Armando Solar-Lezama. 2008.
\newblock \emph{Program synthesis by sketching}.
\newblock University of California, Berkeley.

\bibitem[{Solar-Lezama et~al.(2006)Solar-Lezama, Tancau, Bodik, Seshia, and
  Saraswat}]{solar2006combinatorial}
Armando Solar-Lezama, Liviu Tancau, Rastislav Bodik, Sanjit Seshia, and Vijay
  Saraswat. 2006.
\newblock Combinatorial sketching for finite programs.
\newblock In \emph{Proceedings of the 12th international conference on
  Architectural support for programming languages and operating systems}, pages
  404--415.

\bibitem[{Tam et~al.(2024)Tam, Wu, Tsai, Lin, Lee, and Chen}]{tam2024let}
Zhi~Rui Tam, Cheng-Kuang Wu, Yi-Lin Tsai, Chieh-Yen Lin, Hung-yi Lee, and
  Yun-Nung Chen. 2024.
\newblock Let me speak freely? a study on the impact of format restrictions on
  large language model performance.
\newblock In \emph{Proceedings of the 2024 Conference on Empirical Methods in
  Natural Language Processing: Industry Track}, pages 1218--1236.

\bibitem[{Wang et~al.(2023)Wang, Xie, Jiang, Mandlekar, Xiao, Zhu, Fan, and
  Anandkumar}]{wang2023voyager}
Guanzhi Wang, Yuqi Xie, Yunfan Jiang, Ajay Mandlekar, Chaowei Xiao, Yuke Zhu,
  Linxi Fan, and Anima Anandkumar. 2023.
\newblock Voyager: An open-ended embodied agent with large language models.
\newblock \emph{arXiv preprint arXiv:2305.16291}.

\bibitem[{Willard and Louf(2023)}]{willard2023efficient}
Brandon~T Willard and R{\'e}mi Louf. 2023.
\newblock Efficient guided generation for large language models.
\newblock \emph{arXiv preprint arXiv:2307.09702}.

\bibitem[{Wiseman et~al.(2017)Wiseman, Shieber, and
  Rush}]{wiseman2017challenges}
Sam Wiseman, Stuart~M Shieber, and Alexander~M Rush. 2017.
\newblock Challenges in data-to-document generation.
\newblock In \emph{Proceedings of the 2017 conference on empirical methods in
  natural language processing}, pages 2253--2263.

\bibitem[{Wu and Zhu(2011)}]{wu2011numerical}
Tianfu Wu and Song-Chun Zhu. 2011.
\newblock A numerical study of the bottom-up and top-down inference processes
  in and-or graphs.
\newblock \emph{International journal of computer vision}, 93(2):226--252.

\bibitem[{Yao et~al.(2022)Yao, Zhao, Yu, Du, Shafran, Narasimhan, and
  Cao}]{yao2022react}
Shunyu Yao, Jeffrey Zhao, Dian Yu, Nan Du, Izhak Shafran, Karthik Narasimhan,
  and Yuan Cao. 2022.
\newblock React: Synergizing reasoning and acting in language models.
\newblock \emph{arXiv preprint arXiv:2210.03629}.

\bibitem[{Zhang et~al.(2025)Zhang, Peng, Kong, Yang, Wu, Yu, Xiang, Ruan, Wang,
  Song et~al.}]{zhang2025autoenv}
Jiayi Zhang, Yiran Peng, Fanqi Kong, Cheng Yang, Yifan Wu, Zhaoyang Yu, Jinyu
  Xiang, Jianhao Ruan, Jinlin Wang, Maojia Song, and 1 others. 2025.
\newblock Autoenv: Automated environments for measuring cross-environment agent
  learning.
\newblock \emph{arXiv preprint arXiv:2511.19304}.

\bibitem[{Zhang et~al.(2026{\natexlab{a}})Zhang, Huang, Ma, Chen, Ma, Du, Zhu,
  Yang, and Feng}]{zhang2026social}
Xiaoyuan Zhang, Yizhe Huang, Chengdong Ma, Zhixun Chen, Long Ma, Yali Du,
  Song-Chun Zhu, Yaodong Yang, and Xue Feng. 2026{\natexlab{a}}.
\newblock Social world model-augmented mechanism design policy learning.
\newblock \emph{Advances in Neural Information Processing Systems},
  38:111699--111720.

\bibitem[{Zhang et~al.(2026{\natexlab{b}})Zhang, Ma, Huang, Huang, Qi, Zhu,
  Feng, and Yang}]{zhang2026world}
Xiaoyuan Zhang, Chengdong Ma, Yizhe Huang, Weidong Huang, Siyuan Qi, Song-Chun
  Zhu, Xue Feng, and Yaodong Yang. 2026{\natexlab{b}}.
\newblock World models should prioritize the unification of physical and social
  dynamics.
\newblock \emph{Advances in Neural Information Processing Systems}, 38.

\bibitem[{Zhou et~al.(2025)Zhou, Hua, Pan, Cheng, Wu, Yu, and
  Wang}]{zhou2025rulearena}
Ruiwen Zhou, Wenyue Hua, Liangming Pan, Sitao Cheng, Xiaobao Wu, En~Yu, and
  William~Yang Wang. 2025.
\newblock Rulearena: A benchmark for rule-guided reasoning with llms in
  real-world scenarios.
\newblock In \emph{Proceedings of the 63rd Annual Meeting of the Association
  for Computational Linguistics (Volume 1: Long Papers)}, pages 550--572.

\bibitem[{Zhu and Mumford(2007)}]{zhu2007stochastic}
Song-Chun Zhu and David Mumford. 2007.
\newblock A stochastic grammar of images.
\newblock \emph{Foundations and trends in computer graphics and vision},
  2(4):259--362.

\end{thebibliography}

\appendix

\section{Benchmark Details}
\label{app:benchmark-stats}

This appendix records the explicit goal taxonomy of \dataset{} and the document
lineage of RealCharter-Bench. The tables below summarize the released benchmark
artifacts at the level needed to reproduce dataset composition without rerunning
the generation or evaluation pipeline.

\paragraph{Artifact use and intended use.}
Existing artifacts are used only as research evaluation substrates: public
policy materials are converted into normalized task cards with source clauses
hidden from generators, and the environment-derived record layer defines typed
actions, observables, and evidence channels. Where source materials specify
access or license conditions, we retain attribution and provenance and limit
derived benchmark materials to research evaluation. The artifacts introduced in
this paper are intended for studying grounded rule synthesis and evaluation, not
for legal advice, institutional policy adoption, or automated enforcement without
independent review.

\subsection{\dataset{} Scene Families and Goal Splits}

\dataset{} contains 116 goals organized along two orthogonal axes. The first
axis is the \emph{scene family}, which determines the operational vocabulary
(roles, objects, canonical actions, observables, and evidence channels) that
the rule must be grounded against. The second axis is the \emph{goal category},
which controls the normative pressure the rule must address. Family sizes vary
between 12 and 18 by scenic affordance, while the four goal categories each
contain exactly 29 goals.

\begin{table*}[!t]
\centering
\small
\setlength{\tabcolsep}{4pt}
\begin{tabular}{lp{5.4cm}rrrrr}
\toprule
\textbf{Scene family} & \textbf{Scene tags} &
\textbf{Att.} & \textbf{Prop.} & \textbf{Task.} & \textbf{Mix.} & \textbf{Total} \\
\midrule
service\_coordination       & \texttt{restaurant}                                  & 4 & 4 & 4 & 4 & 16 \\
public\_order\_and\_patrol  & \texttt{public\_square, territory\_border}           & 4 & 4 & 4 & 4 & 16 \\
commerce\_and\_exchange     & \texttt{shop, public\_square}                         & 4 & 3 & 4 & 3 & 14 \\
residence\_and\_rest        & \texttt{territory\_border, public\_square}            & 3 & 4 & 3 & 4 & 14 \\
labor\_and\_production      & \texttt{farm}                                         & 4 & 3 & 4 & 3 & 14 \\
education\_and\_play        & \texttt{public\_square}                               & 3 & 3 & 3 & 3 & 12 \\
crisis\_response            & \texttt{public\_square, restaurant, farm}             & 4 & 5 & 5 & 4 & 18 \\
open\_space\_navigation \textit{(OOD)} & \texttt{public\_square, territory\_border} & 3 & 3 & 2 & 4 & 12 \\
\midrule
\textbf{Total}              & ---                                                   & \textbf{29} & \textbf{29} & \textbf{29} & \textbf{29} & \textbf{116} \\
\bottomrule
\end{tabular}
\caption{Per-family goal counts crossed with the four goal categories. The
\textit{open\_space\_navigation} family is held out for OOD evaluation; the
remaining 104 goals are split into 16 development and 88 in-distribution test
goals.}
\label{tab:bench-family-cat}
\end{table*}

Table~\ref{tab:bench-cat-examples} grounds the four category labels in concrete
goal instances. The example identifier names a benchmark item, and the intent
stem gives the natural-language policy objective behind that item.

\begin{table*}[!t]
\centering
\footnotesize
\setlength{\tabcolsep}{4pt}
\begin{tabular}{lp{2.4cm}p{4.4cm}p{4.0cm}}
\toprule
\textbf{Category} & \textbf{Example} \textsc{goal\_id} & \textbf{Intent stem} &
\textbf{Required / forbidden actions} \\
\midrule
attendance     & GNRS-001 \newline \textit{service\_coordination} &
coordinate shared service shifts with auditable arrival and closure &
req.\ \texttt{CheckIn}, \texttt{CheckOut}; forbid.\ none \\
property       & GNRS-059 \newline \textit{residence\_and\_rest} &
protect rest and personal boundaries while allowing justified access &
req.\ \texttt{Speak}; forbid.\ \texttt{Interact} \\
task\_execution & GNRS-002 \newline \textit{service\_coordination} &
coordinate shared service shifts with auditable arrival and closure &
req.\ \texttt{CheckOut}, \texttt{Deliver}; forbid.\ none \\
mixed          & GNRS-099 \newline \textit{crisis\_response} &
make emergency response fast, auditable, and fair under resource pressure &
req.\ \texttt{Speak}; forbid.\ \texttt{Trade} \\
\bottomrule
\end{tabular}
\caption{One illustrative goal per category. Required and forbidden actions are
drawn from the canonical action set of Table~\ref{tab:bench-record-layer}; full
roles, objects, observables, and evidence channels are part of the released
benchmark specification.}
\label{tab:bench-cat-examples}
\end{table*}

\subsection{Operational Record Layer Vocabulary}

The record layer \(D\) is a read-only projection from the environment substrate:
286 child-action definitions, 37 NPC action names, 16 composite actions, 3{,}285
resource instances, and 38 rooms are normalized into the typed vocabulary in
Table~\ref{tab:bench-record-layer}. Generators receive the projected vocabulary
only. Each
canonical action carries a \texttt{direct} / \texttt{inferred} / \texttt{derivable}
support tag, which is the source of the per-payload weights
\(r(p)\in\{1.00, 0.65, 0.45, 0\}\) used by \hardok{} in
Appendix~\ref{app:hardok-audit}.

\begin{table*}[!t]
\centering
\footnotesize
\setlength{\tabcolsep}{4pt}
\begin{tabular}{lp{11.0cm}r}
\toprule
\textbf{Layer} & \textbf{Members} & \textbf{$|\cdot|$} \\
\midrule
Canonical actions &
\texttt{MoveTo}, \texttt{Interact}, \texttt{Speak}, \texttt{Trade},
\texttt{Craft}, \texttt{Deliver}, \texttt{Patrol}, \texttt{CheckIn},
\texttt{CheckOut}, \texttt{Harvest}, \texttt{SignContract}, \texttt{PublishTask}
& 12 \\
Observables &
\texttt{position}, \texttt{room\_presence}, \texttt{object\_possession},
\texttt{object\_state}, \texttt{need\_state\_delta},
\texttt{resource\_affordance}, \texttt{social\_interaction\_event}
& 7 \\
Evidence channels &
\texttt{movement\_log}, \texttt{room\_presence\_log}, \texttt{state\_log},
\texttt{object\_interaction\_log}, \texttt{social\_graph\_log},
\texttt{task\_log}, \texttt{contract\_log}, \texttt{owner\_log},
\texttt{checkin\_log}, \texttt{checkout\_log}, \texttt{patrol\_report}
& 11 \\
Room families &
\texttt{delivery\_station}, \texttt{learning\_and\_public\_activity},
\texttt{food\_service}, \texttt{private\_living}, \texttt{storage\_and\_utility},
\texttt{movement\_boundary}, \texttt{health\_and\_sport}
& 7 \\
Resource families &
\texttt{consumable\_food}, \texttt{hydration\_source}, \texttt{rest\_resource},
\texttt{play\_learning\_resource}, \texttt{exchange\_delivery\_object},
\texttt{boundary\_access\_object}, \texttt{tool\_equipment},
\texttt{hygiene\_health\_resource}
& 8 \\
\bottomrule
\end{tabular}
\caption{Operational primitives exposed to generators. The vocabulary is fixed
across all methods and seeds; the search space \(\mathcal{A}(x)\) of
Section~\ref{sec:method} is built from these primitives plus the goal-level
templates.}
\label{tab:bench-record-layer}
\end{table*}

\subsection{RealCharter-Bench Document Lineage}

RealCharter-Bench contains 53 tasks drawn from 15 publicly available policy
documents and grouped into four source families.
Table~\ref{tab:bench-realcharter} lists the source documents and the per-family
task counts. All documents are public Code-of-Conduct pages, governance
charters, or handbook sections; the Contributor Covenant template is
distributed under CC~BY~4.0 and the remaining sources are documented with
retrieval dates in the released benchmark materials.

\begin{table*}[!t]
\centering
\footnotesize
\setlength{\tabcolsep}{4pt}
\begin{tabular}{lp{8.0cm}rr}
\toprule
\textbf{Source family / domain} & \textbf{Public source documents} &
\textbf{Docs} & \textbf{Tasks} \\
\midrule
oss\_governance / software &
Python PSF, Django, Apache, Drupal, Node.js, CNCF, Debian, Contributor Covenant
& 8 & 25 \\
community\_rules / community &
Stack Overflow, Wikimedia UCC, Ubuntu, Mozilla
& 4 & 16 \\
event\_platform\_safety / event &
Linux Foundation Events, NumFOCUS
& 2 & 8 \\
public\_handbook / workplace &
GitLab Handbook
& 1 & 4 \\
\midrule
\textbf{Total} & 15 public policy sources & \textbf{15} & \textbf{53} \\
\bottomrule
\end{tabular}
\caption{Provenance of the 53 RealCharter-Bench tasks. Each task is exposed to
generators as a normalized policy card containing source family, domain,
context, roles, trigger conditions, normative summary, required and forbidden
actions, evidence channels, procedure requirements, and expected slots. The
original clause text is withheld from generators.}
\label{tab:bench-realcharter}
\end{table*}

\section{Extended Results}
\label{app:result-suite}

This appendix consolidates the supporting result tables behind the headline
claims. Subsection~\ref{app:metric-sensitivity} reports how the disclosed
\overall{} composite behaves as the \hardok{} weight is varied;
Subsection~\ref{app:hardok-audit} explains the deliberately uncomfortable
\hardok{} gap between \ours{} and the No LLM candidates ablation;
Subsection~\ref{app:crossmodel} records the completed cross-model run status
and the omitted generator columns; and
Subsection~\ref{app:full-results} expands the main, OOD, and RealCharter
tables with bootstrap intervals, candidate-pool diagnostics, and
human-calibration breakdowns. Table~\ref{tab:result-contrasts-app} is the
single-page map of the contrasts that matter most across the result suite;
positive gaps mean the first method or condition is better on that metric,
and negative gaps are retained where they reveal a real trade-off.

\begin{table*}[!t]
\centering
\footnotesize
\setlength{\tabcolsep}{4pt}
\begin{tabular}{lllrrr}
\toprule
\textbf{Question} & \textbf{Contrast} & \textbf{Metric} &
\textbf{Primary} & \textbf{Comparator} & \textbf{Gap} \\
\midrule
Main benchmark & \ours{} vs. No LLM candidates & \overall{} & 74.99 & 69.25 & +5.74 \\
Main benchmark & \ours{} vs. No LLM candidates & Avg & 80.95 & 67.24 & +13.71 \\
Main benchmark & \ours{} vs. No LLM candidates & \hardok{} & 51.14 & 77.27 & -26.14 \\
Main benchmark & \ours{} vs. Direct Prompting & readability & 79.47 & 79.77 & -0.30 \\
Ablation study & \ours{} vs. random candidates & Avg & 80.95 & 79.92 & +1.03 \\
Ablation study & \ours{} vs. uniform weights & \hardok{} & 51.14 & 60.61 & -9.47 \\
Ablation study & \ours{} vs. CP-SAT control & \overall{} & 74.99 & 63.94 & +11.05 \\
Slot intervention & baseline vs. degraded \slot{NormBody} & Avg & 49.65 & 44.98 & +4.67 \\
Slot intervention & baseline vs. degraded \slot{EvidencePolicy} & Avg & 49.65 & 48.47 & +1.18 \\
Held-out OOD & \ours{} vs. No LLM candidates & \overall{} & 82.71 & 68.71 & +14.00 \\
GPT-5.1 robustness & \ours{} vs. Direct Prompting & Avg & 81.53 & 84.08 & -2.55 \\
GPT-5.1 robustness & \ours{} vs. No LLM candidates & \overall{} & 81.97 & 69.10 & +12.86 \\
RealCharter-Bench & \ours{} vs. Self-Refine & \overall{} & 94.69 & 86.09 & +8.60 \\
Human calibration & \ours{} vs. No search & human mean & 4.20 & 3.51 & +0.69 \\
Second evaluator & \ours{} vs. best non-\ours{} & Gemini Avg & 96.40 & 85.60 & +10.80 \\
\bottomrule
\end{tabular}
\caption{Selected contrasts for interpreting the result suite. Values are drawn
from the reported results and are included to expose the trade-offs behind
the headline claims rather than to introduce a new ranking.}
\label{tab:result-contrasts-app}
\end{table*}

Read together, the contrasts in Table~\ref{tab:result-contrasts-app} say that
\ours{} consistently wins on the disclosed \overall{} composite while
deliberately losing on the standalone binary \hardok{} gate (because richer
selected payloads stress a ratio-based feasibility threshold), that the
slot-sensitivity gap is concentrated in the rule's semantic core
(\slot{NormBody}, \slot{Scope}, \slot{Trigger}) rather than in evidence or
procedure, and that the CP-SAT control isolates the gain to structured search
over the candidate inventory rather than to discrete optimization in general.
The remaining subsections elaborate each of these contrasts.

\subsection{Metric Sensitivity}
\label{app:metric-sensitivity}

The executable \overall{} score uses a 20\% \hardok{} weight. We audited
alternative weights from 0\% to 40\%. \ours{} ranks first in the main benchmark,
every cross-model robustness column, and held-out OOD evaluation for 10\%, 20\%,
and 30\% weights. At 40\%, No LLM candidates overtakes \ours{} in the
GPT-4o-mini main setting, indicating that 40\% is too feasibility-heavy for a
headline metric.

This analysis addresses a potential concern about the headline metric: if the
\hardok{} term were tuned only to make \ours{} win, the ranking should be unstable
under small changes in its weight. Table~\ref{tab:metric-sensitivity-app}
shows the opposite for the reported 20\% setting. \ours{} is already strongest on pure
Avg in the main and OOD settings, and it remains strongest for 10--30\%
\hardok{} weights. In the cross-model setting, stronger generators sometimes
make Direct Prompting competitive on pure Avg, but the moment executable
feasibility is given even a small weight, \ours{} becomes the top method. The
only warning sign is the 40\% main setting, where No LLM candidates'
conservative feasibility dominates despite lower rubric quality; this is why the
paper treats 40\% as too feasibility-heavy.

\begin{table*}[!t]
\centering
\small
\begin{tabular}{llrrrr}
\toprule
\textbf{Context} & \(\boldsymbol{\alpha}\) & \textbf{Winner} &
\textbf{\ours{} rank} & \textbf{\ours{} score} & \textbf{Margin} \\
\midrule
Main benchmark & 0.0 & \ours{} & 1 & 80.95 & +12.12 \\
Main benchmark & 0.1 & \ours{} & 1 & 77.97 & +9.73 \\
Main benchmark & 0.2 & \ours{} & 1 & 74.99 & +5.74 \\
Main benchmark & 0.3 & \ours{} & 1 & 72.01 & +1.76 \\
Main benchmark & 0.4 & No LLM candidates & 2 & 69.03 & -2.23 \\
Held-out OOD & 0.0 & \ours{} & 1 & 79.78 & +14.22 \\
Held-out OOD & 0.1 & \ours{} & 1 & 81.24 & +16.44 \\
Held-out OOD & 0.2 & \ours{} & 1 & 82.71 & +14.00 \\
Held-out OOD & 0.3 & \ours{} & 1 & 84.18 & +11.56 \\
Held-out OOD & 0.4 & \ours{} & 1 & 85.64 & +9.11 \\
GPT-5.1 robustness & 0.0 & Direct Prompting & 2 & 81.53 & -2.55 \\
GPT-5.1 robustness & 0.2 & \ours{} & 1 & 81.97 & +12.86 \\
Gemini robustness & 0.0 & Direct Prompting & 2 & 73.15 & -3.91 \\
Gemini robustness & 0.2 & \ours{} & 1 & 73.75 & +4.71 \\
Qwen3 robustness & 0.0 & Direct Prompting & 2 & 74.89 & -3.98 \\
Qwen3 robustness & 0.2 & \ours{} & 1 & 74.08 & +4.91 \\
\bottomrule
\end{tabular}
\caption{Sensitivity of the executable composite
\(\overall_\alpha=(1-\alpha)\mathrm{Avg}+\alpha\hardok{}\). Margins are
\ours{} minus the best non-\ours{} method under the same context and weight.}
\label{tab:metric-sensitivity-app}
\end{table*}

The main takeaway is that the paper's conclusion is not an artifact of a single
hidden scalarization. The disclosed 20\% setting is in the stable region where
\ours{} wins across the main benchmark, held-out OOD evaluation, and all
completed cross-model columns. At the same time, exposing the 40\% failure case
is useful: it makes clear that \hardok{} is a diagnostic feasibility term rather
than a complete definition of rule quality.

\subsection{Cross-model Run Status}
\label{app:crossmodel}

The cross-model experiment was designed to compare Direct Prompting,
Self-Refine, No LLM candidates, and \ours{} across several generator models with
a fixed GPT-5.1 scorer. The completed run set has
4224/4224 expected runs. GPT-4o and GLM-4.6 were omitted from the final report
after partial-run and timeout issues; Gemini Flash Lite and Qwen3-235B provide
the non-GPT generator columns.

The purpose of this appendix note is to separate completed evidence from planned
coverage. The reported cross-model claim should be read as: among the completed
generator columns, and under the same independent GPT-5.1 scorer, \ours{} has the
highest executable \overall{} score in every column. It should not be read as a
claim about every commercial or open model family. The omitted GPT-4o and
GLM-4.6 columns were excluded because their runs were not complete enough to
support a balanced method comparison, not because they changed the reported
ranking.

\subsection{Hard-OK Forensic Audit}
\label{app:hardok-audit}

The empirical claims rely on the disclosed executable \overall{} metric, so we
audit its most fragile ingredient. The main table contains a deliberately
uncomfortable result: the No LLM candidates ablation has higher \hardok{} than
full \ours{}. A forensic audit found that the stored \hardok{} value is computed
at the AOG level rather than from the polished surface text, so the drop is not
primarily caused by the polish stage deleting grounded tokens. Instead, the full
\ours{} path keeps an average of 16.65 LLM-augmented candidates per run and
selects slightly more payloads, including more non-goal evidence payloads. This
increases expressiveness and rubric quality, but makes the ratio-based
real-data-grounding gate more fragile. Averaged over the main runs, \ours{} has
\(\rho=0.507\), 6.19 directly supported payloads, 6.08 inferred payloads, and
6.70 missing payloads; No LLM candidates is more conservative with
\(\rho=0.551\) and 6.30 missing payloads. A representative service-shift run
illustrates the gate: both methods cover CheckIn/CheckOut actions and movement
or check-in logs, but \ours{} also selects additional procedure/evidence
payloads; with 4 direct, 7 inferred, and 6 missing supported items,
\(\rho=0.479<0.50\), so \hardok{} fails despite a more specific clause. We
therefore keep \hardok{} visible and use it only as the disclosed 20\%
feasibility term in \overall{}, rather than folding it silently into Avg.

\subsection{Additional Result Details}
\label{app:full-results}

This appendix reports additional result details as ordinary paper tables rather
than raw per-example listings.

Table~\ref{tab:bootstrap-app} reports paired bootstrap intervals for the main
comparisons. The resampling unit is the goal: for the main benchmark and
held-out OOD evaluation, seed-level scores
are first averaged within each goal, and bootstrap samples are drawn over goals.
This makes the interval a statement about task variation rather than about
random seed noise alone. The margin column is always \ours{} minus the strongest
available non-\ours{} method for the same experiment and metric.

\begin{table*}[!t]
\centering
\small
\begin{tabular}{llrrrrr}
\toprule
\textbf{Experiment} & \textbf{Metric} & \textbf{Best non-\ours{}} &
\textbf{n} & \textbf{\ours{}} & \textbf{Margin} & \textbf{95\% CI} \\
\midrule
Main benchmark & \overall{} & No LLM candidates & 88 & 74.99 & +5.74 & [3.93, 7.53] \\
Main benchmark & Avg & No LLM candidates & 88 & 80.95 & +13.71 & [12.65, 14.85] \\
Main benchmark & \hardok{} & No LLM candidates & 88 & 51.14 & -26.14 & [-34.47, -18.56] \\
Main benchmark & Faith & No LLM candidates & 88 & 3.96 & +0.56 & [0.42, 0.70] \\
Held-out OOD & \overall{} & No LLM candidates & 12 & 82.71 & +14.00 & [11.42, 16.49] \\
Held-out OOD & Avg & No LLM candidates & 12 & 79.78 & +18.89 & [16.11, 21.67] \\
Held-out OOD & \hardok{} & No LLM candidates & 12 & 94.44 & -5.56 & [-13.89, 0.00] \\
Held-out OOD & Faith & No LLM candidates & 12 & 3.92 & +1.11 & [0.94, 1.28] \\
RealCharter & \overall{} & Self-Refine & 53 & 94.69 & +8.60 & [5.25, 12.22] \\
RealCharter & Avg & Self-Refine & 53 & 93.36 & +5.09 & [3.61, 6.64] \\
RealCharter & \hardok{} & Self-Refine & 53 & 100.00 & +22.64 & [11.32, 33.96] \\
RealCharter & Faith & Self-Refine & 53 & 5.00 & +0.23 & [0.13, 0.35] \\
\bottomrule
\end{tabular}
\caption{Paired bootstrap confidence intervals for \ours{} against the best
non-\ours{} method in each experiment. For the main and held-out evaluations,
seeds are averaged within goal before resampling.}
\label{tab:bootstrap-app}
\end{table*}

The bootstrap results sharpen the main narrative. On the main benchmark,
\ours{}'s \overall{} margin over No LLM candidates is positive with a 95\%
interval of [3.93, 7.53], and the Avg margin is larger still. At the same time,
the \hardok{} row shows the visible trade-off: No LLM candidates has higher
binary feasibility coverage in the main setting, while \ours{} wins the combined
task-validity score because its rubric quality is substantially higher. The
held-out OOD and RealCharter intervals are also positive on \overall{},
supporting the paper's robustness and external-transfer claims while preserving
the distinction between quality, feasibility, and faithfulness.

Table~\ref{tab:realcharter-full-app} gives the full RealCharter-Bench breakdown.
This benchmark is constructed from public charter and rule clauses, but the
source clauses are hidden from generators. All methods receive the same
normalized task cards. Self-Refine is included as the strongest LLM workflow baseline,
and the no-search ablation is included to test whether the gain comes merely
from the \ours{} renderer and inventory rather than from structured candidate
scoring.

\begin{table*}[!t]
\centering
\small
\begin{tabular}{lrrrrrrrr}
\toprule
\textbf{Method} & \textbf{n} & \textbf{Spec} & \textbf{Exec} &
\textbf{Flu} & \textbf{Read} & \textbf{Faith} & \textbf{\hardok{}} &
\textbf{\overall{}} \\
\midrule
Direct Prompting & 53 & 4.60 & 3.87 & 4.44 & 4.27 & 4.64 & 62.26 & 82.26 \\
Self-Refine & 53 & 4.77 & 4.06 & 4.28 & 4.20 & 4.77 & 77.36 & 86.09 \\
No LLM candidates & 53 & 4.52 & 3.58 & 4.22 & 4.00 & 4.50 & 47.17 & 76.07 \\
No search & 53 & 2.39 & 1.73 & 4.50 & 4.50 & 2.68 & 0.00 & 50.56 \\
\ours{} & 53 & 5.00 & 4.34 & 4.50 & 4.50 & 5.00 & 100.00 & 94.69 \\
\bottomrule
\end{tabular}
\caption{Full RealCharter-Bench metric breakdown. Direct Prompting and
Self-Refine are LLM workflow baselines over the same normalized task cards; No
search uses the \ours{} inventory and renderer without candidate scoring or
search.}
\label{tab:realcharter-full-app}
\end{table*}

The RealCharter result is therefore stronger than a comparison against Direct
Prompting and No LLM candidates alone. Self-Refine narrows the gap by adding a
draft-and-refine workflow, but \ours{} still leads on faithfulness, \hardok{},
and \overall{}. The no-search ablation fails on faithfulness and feasibility
despite using the same broad rule inventory, indicating that the search step is
doing real work. The appropriate claim is bounded: \ours{} generalizes to a
real-derived external benchmark under this
task-card protocol; the experiment does not claim that generated rules are
better than the original human-authored clauses.

Table~\ref{tab:candidate-source-app} first audits everyday candidate
construction and selected payload sources in the main test runs.
Table~\ref{tab:candidate-pool-app} then audits the candidate pools used by the
slot-sensitivity intervention. The key diagnostic there is replacement coverage.
If a slot ablation failed to find replacement candidates, then a small score drop
could be explained by a weak intervention rather than by genuine slot robustness.
Full coverage across all five slots rules out that trivial explanation.

\begin{table*}[!t]
\centering
\small
\setlength{\tabcolsep}{4pt}
\begin{tabular}{lrrrr}
\toprule
\textbf{Audit quantity} & \textbf{Template/goal} & \textbf{Schema} &
\textbf{Record} & \textbf{LLM} \\
\midrule
Candidates generated per test goal & 14.0 & 9.3 & 10.2 & 16.7 \\
LLM candidates dropped per run & -- & -- & -- & 1.3 \\
Selected payloads/run, \ours{} & 9.3 & 3.0 & 8.3 & 3.5 \\
Selected payloads/run, No LLM candidates & 8.4 & 2.9 & 12.0 & -- \\
\bottomrule
\end{tabular}
\caption{Candidate-source audit for main-test search. Deterministic candidate
counts are reconstructed from the frozen test goals and context/record
inventory; LLM kept/dropped counts and selected-payload sources are computed from
completed main-run artifacts.}
\label{tab:candidate-source-app}
\end{table*}

\begin{table}[!t]
\centering
\footnotesize
\setlength{\tabcolsep}{4pt}
\begin{tabular}{lrrrr}
\toprule
\textbf{Slot} & \textbf{Cand.} & \textbf{Distr.} &
\textbf{Sig.} & \textbf{Cov.} \\
\midrule
\slot{Scope} & 7.00 & 812 & 52 & 100.0 \\
\slot{Trigger} & 4.23 & 464 & 71 & 100.0 \\
\slot{NormBody} & 6.52 & 754 & 103 & 100.0 \\
\slot{EvidencePolicy} & 6.22 & 697 & 46 & 100.0 \\
\slot{Procedure} & 9.15 & 1015 & 16 & 100.0 \\
\bottomrule
\end{tabular}
\caption{Candidate-pool audit for the slot-sensitivity experiment. Replacement
coverage is the percentage of slot-intervention runs in which the degraded slot
was actually replaced.}
\label{tab:candidate-pool-app}
\end{table}

The audit also helps interpret the surprising evidence-policy result. Evidence
has full replacement coverage and nontrivial distractor mass, so its smaller
sensitivity is not simply an artifact of missing degraded evidence candidates.
The more plausible interpretation is structural: in this benchmark, errors in
\slot{NormBody}, \slot{Scope}, and \slot{Trigger} more directly change who must
do what and when, while several evidence-policy variants remain compatible with
the same underlying rule. This is why the paper presents the slot analysis as an
empirical diagnostic rather than as a universal theory of normative language.

Table~\ref{tab:human-calibration-app} reports the method-blinded human
calibration study for the main benchmark sample. It is included as a
human-validity check on the model-based rubric, not as an additional automatic
leaderboard.

\paragraph{Human annotation protocol.}
The calibration used three volunteer PhD annotators from different disciplinary
backgrounds. They were recruited as academic volunteers, received no payment or
other compensation, and could decline or stop without penalty. Before
annotation, they were told that their scores would be used only in aggregate for
a research paper evaluating automatic rule-generation methods; no personally
identifying information or sensitive personal data were collected. Because the
task consisted only of rating generated text against synthetic or normalized task
cards, and did not collect private data, involve intervention, or evaluate the
annotators themselves, the protocol was not submitted for IRB or ethics-board
review. No screenshots or interactive interface were part of the instructions.

The written instruction given to annotators was: ``For each row, read only the
task card and generated clause. Method names, automatic scores, source clauses,
and search traces are hidden. Do not reward a clause merely for being long or
formal. A good clause should be faithful to the task, grounded in the provided
roles/actions/evidence/context, procedurally auditable, and readable as an
organizational rule. Score intent faithfulness, grounding, procedural closure,
and readability on a 1--5 ordinal scale. Use the full 1--5 range: if a clause is
fluent but invents unavailable evidence or procedures, lower grounding; if a
clause is grounded but terse, grounding can remain high while readability or
procedural closure can be lower.''

\begin{table*}[!t]
\centering
\small
\begin{tabular}{lrrrrrr}
\toprule
\textbf{Method} & \textbf{n} & \textbf{Faith} &
\textbf{Ground} & \textbf{Proc.} & \textbf{Read} & \textbf{Mean} \\
\midrule
Direct Prompting & 20 & 3.93 & 2.22 & 2.98 & 4.62 & 3.44 \\
Self-Refine & 20 & 3.72 & 3.43 & 2.82 & 2.52 & 3.12 \\
No search & 20 & 3.87 & \best{3.48} & 3.47 & 3.22 & 3.51 \\
No LLM candidates & 20 & 3.87 & 2.32 & 3.22 & 3.55 & 3.24 \\
\ours{} & 20 & \best{4.53} & 3.42 & \best{4.12} & \best{4.72} & \best{4.20} \\
\bottomrule
\end{tabular}
\caption{Human calibration details for the main benchmark sample. Three
annotators scored 100 method-blinded outputs from 20 goals. Faith is intent
faithfulness, Ground is grounding, Proc. is procedural closure, and Mean averages
the four 1--5 dimensions.}
\label{tab:human-calibration-app}
\end{table*}

Table~\ref{tab:human-calibration-app} gives the full method breakdown for the
human calibration sample summarized in Section~\ref{sec:main-results}. The sampled
rows contain no method labels, automatic scores, source clauses, or search traces;
method labels are joined only after annotation. The strongest human signal is not
that \ours{} maximizes every
subdimension: No search slightly leads on grounding. Rather, \ours{} combines
high intent faithfulness, procedural closure, and readability, producing the
highest mean score overall. This is consistent with the main evaluator's
ranking while preserving the important \hardok{} caveat discussed in
Appendix~\ref{app:hardok-audit}.

\begin{table}[!t]
\centering
\footnotesize
\setlength{\tabcolsep}{4pt}
\begin{tabular}{lrr}
\toprule
\textbf{Agreement quantity} & \textbf{Value} & \textbf{Scope} \\
\midrule
Human--GPT Spearman \(\rho\) & 0.77 & row-level mean \\
Human--GPT Kendall \(\tau\) & 0.62 & row-level mean \\
Human--GPT Pearson \(r\) & 0.91 & method means \\
Pairwise human Spearman & 0.65 & annotator means \\
Ordinal within-one agreement & 85.6 & all 1--5 labels \\
\bottomrule
\end{tabular}
\caption{Agreement statistics for the human calibration study. Ordinal
within-one agreement is the average pairwise percentage of labels within one
point across the four 1--5 dimensions.}
\label{tab:human-agreement-app}
\end{table}

Table~\ref{tab:second-evaluator-app} reports a separate second-model evaluator
audit on the same 100-row calibration pack. This is not a human study and not a
replacement for the main GPT-5.1 rubric. It asks a narrower question: if a
different judge family scores the same blinded outputs with the same five
1--5 rubric dimensions, does the main conclusion disappear? The answer is no.
The Gemini-family judge has moderate item-level agreement with GPT-5.1 Avg
(Pearson \(r=0.575\), Spearman \(\rho=0.611\), Kendall \(\tau_b=0.480\)).
It does not reproduce the full ordering of the non-\ours{} methods, but it
independently ranks \ours{} highest by a large margin.

\begin{table}[!t]
\centering
\footnotesize
\setlength{\tabcolsep}{4pt}
\resizebox{\columnwidth}{!}{%
\begin{tabular}{lrrrr}
\toprule
\textbf{Method} & \textbf{n} & \textbf{GPT Avg} &
\textbf{Gemini Avg} & \textbf{Gemini \overall{}} \\
\midrule
Direct Prompting & 20 & 68.80 & 71.40 & 57.12 \\
Self-Refine & 20 & 58.20 & 78.60 & 63.88 \\
No search & 20 & 64.80 & 85.60 & 75.48 \\
No LLM candidates & 20 & 67.80 & 84.80 & 82.84 \\
\ours{} & 20 & \best{80.40} & \best{96.40} & \best{86.12} \\
\bottomrule
\end{tabular}}
\caption{Second-model evaluator sanity check on the 100-row human-calibration
pack. GPT Avg is the primary GPT-5.1 v1 rubric mean. Gemini Avg is scored by a
Gemini-family judge with the same five rubric dimensions.}
\label{tab:second-evaluator-app}
\end{table}

Table~\ref{tab:failure-modes-app} summarizes the recurring failure modes that
the quantitative analyses point to. The table is diagnostic rather than a new
experiment: it connects the main score table, the slot intervention, the
candidate-pool audit, and RealCharter-Bench into the same interpretation.

\begin{table*}[!t]
\centering
\footnotesize
\setlength{\tabcolsep}{4pt}
\begin{tabularx}{\textwidth}{@{}p{0.18\textwidth}YYY@{}}
\toprule
\textbf{Failure mode} & \textbf{Typical symptom} & \textbf{Evidence in results} &
\textbf{Implication} \\
\midrule
Surface-only fluency & The clause reads well but cites generic systems,
manager discretion, or unsupported testimony. & Direct Prompting has strong
fluency/readability but zero main-table \hardok{} and lower faithfulness. &
Language quality must be separated from operational validity. \\
Syntactic structure without grounding & The output follows a schema or
workflow but fails to bind the fields to available records. & Structured
Outputs, Best-of-\(N\), and Self-Refine remain weak on executability and
\hardok{}. & A parseable intermediate form is not enough unless candidates are
record-grounded. \\
Conservative under-generation & The system stays close to safe record payloads
but loses specificity, executability, or naturalness. & No LLM candidates has
the highest main-table \hardok{} but much lower Avg and \overall{} than
\ours{}. & Feasibility coverage should remain visible rather than replace the
rubric score. \\
Semantic-core corruption & The rule changes who is covered, when it activates,
or what conduct is regulated. & Slot degradation hurts most for
\slot{NormBody}, \slot{Scope}, and \slot{Trigger}. & Search should prioritize
the semantic core before optimizing evidence wording or procedure. \\
Renderer-only transfer & A real-derived task card is formatted into prose
without selecting the right operational structure. & The RealCharter no-search
ablation collapses on faithfulness and feasibility. & External transfer depends
on structured search, not just on templates over policy fields. \\
\bottomrule
\end{tabularx}
\caption{Diagnostic failure modes implied by the result suite. The table
collects evidence across the reported experiments rather than introducing an
additional scoring procedure.}
\label{tab:failure-modes-app}
\end{table*}

\section{Qualitative Examples}
\label{app:qualitative}

This appendix gives compact qualitative examples for the main diagnostic claims.
The examples are selected from completed test runs and are not additional
evaluation data. We paraphrase only enough context to make the failure readable;
method labels are the paper-facing names used in the main tables.

Figure~\ref{fig:aog-case-study} visualizes one service-shift example as an AOG,
and Table~\ref{tab:qual-aog-trace} unpacks the corresponding search trace. The
trace is shortened for readability, but it shows the level at which \ours{}
searches: not over whole clauses, but over slot-level operational commitments
that can be checked before prose rendering.

\begin{figure*}[!t]
\centering
\resizebox{\textwidth}{!}{%
\begin{tikzpicture}[
  font=\scriptsize,
  title/.style={align=center, font=\bfseries\scriptsize},
  root/.style={draw, rounded corners, align=center, text width=3.25cm,
    minimum height=0.72cm, fill=gray!10},
  andnode/.style={draw, circle, align=center, minimum size=0.78cm,
    fill=orange!12, font=\bfseries\scriptsize},
  ornode/.style={draw, diamond, aspect=1.7, align=center, inner sep=1pt,
    fill=blue!8, font=\bfseries\scriptsize},
  candnode/.style={draw, rounded corners, align=center, text width=1.35cm,
    minimum height=0.64cm, fill=green!8},
  altnode/.style={draw, dashed, rounded corners, align=center, text width=1.35cm,
    minimum height=0.58cm, fill=red!5},
  keepnode/.style={draw, rounded corners, align=center, text width=1.35cm,
    minimum height=0.58cm, fill=gray!8},
  stepnode/.style={draw, circle, inner sep=1.3pt, fill=green!15},
  rejnode/.style={draw, circle, inner sep=1.3pt, fill=red!18},
  legend/.style={draw, rounded corners, align=left, text width=3.05cm,
    fill=gray!6, inner sep=4pt},
  edge/.style={-{Latex[length=2mm]}, thick, shorten >=1pt, shorten <=1pt},
  weakedge/.style={-{Latex[length=1.8mm]}, dashed, thick, shorten >=1pt, shorten <=1pt},
  scoreline/.style={-{Latex[length=2mm]}, thick, blue!70!black}
]
\node[title] at (-3.9,5.15) {(a) And-Or structure};
\node[title] at (4.75,5.15) {(b) MCMC proposal trajectory};

\node[root] (goal) at (-3.9,4.35) {Goal \(g\): coordinate service shifts\\
with auditable arrival and closure};
\node[andnode] (and) at (-3.9,3.35) {AND};

\node[ornode] (scope) at (-7.55,2.0) {OR\\scope};
\node[ornode] (trigger) at (-5.72,2.0) {OR\\trigger};
\node[ornode] (norm) at (-3.90,2.0) {OR\\norm};
\node[ornode] (evidence) at (-2.08,2.0) {OR\\evidence};
\node[ornode] (procedure) at (-0.25,2.0) {OR\\procedure};

\node[candnode] (s1) at (-7.55,0.95) {S1 staff\\roles};
\node[altnode] (s2) at (-7.55,0.05) {S2 generic\\members};
\node[keepnode] (s3) at (-7.55,-0.85) {S3 patrol\\roles};

\node[candnode] (t1) at (-5.72,0.95) {T1 check-in\\\(\leq15\) min};
\node[altnode] (t2) at (-5.72,0.05) {T2 unbounded\\start};
\node[keepnode] (t3) at (-5.72,-0.85) {T3 speak\\event};

\node[candnode] (n1) at (-3.90,0.95) {N1 require\\check-in/out};
\node[keepnode] (n2) at (-3.90,0.05) {N2 require\\speak};
\node[keepnode] (n3) at (-3.90,-0.85) {N3 forbid\\trade};

\node[candnode] (e1) at (-2.08,0.95) {E1 movement +\\check logs};
\node[altnode] (e2) at (-2.08,0.05) {E2 attendance\\timestamp};
\node[keepnode] (e3) at (-2.08,-0.85) {E3 social\\graph};

\node[candnode] (p1) at (-0.25,0.95) {P1 manager\\review};
\node[keepnode] (p2) at (-0.25,0.05) {P2 no\\appeal};
\node[keepnode] (p3) at (-0.25,-0.85) {P3 informal\\reminder};

\draw[edge] (goal) -- (and);
\foreach \x in {scope,trigger,norm,evidence,procedure} {
  \draw[edge] (and) -- (\x);
}
\foreach \slot/\a/\b/\c in {scope/s1/s2/s3,trigger/t1/t2/t3,norm/n1/n2/n3,evidence/e1/e2/e3,procedure/p1/p2/p3} {
  \draw[edge] (\slot.south) -- (\a.north);
  \draw[weakedge] (\slot.south west) .. controls +(-0.42,-0.45) and +(-0.58,0.35) .. (\b.west);
  \draw[weakedge] (\slot.south east) .. controls +(0.42,-0.62) and +(0.58,0.42) .. (\c.east);
}

\node[legend, text width=2.35cm] at (-7.35,-1.95) {solid green = selected\\
dashed red = rejected\\gray = unselected};

\draw[thick] (2.05,-1.25) -- (2.05,3.95);
\draw[thick] (2.05,-1.25) -- (5.75,-1.25);
\node[rotate=90] at (1.64,1.35) {internal score \(\ssearch\)};
\node at (3.95,-1.72) {proposal step};
\node[anchor=east] at (2.0,-1.25) {low};
\node[anchor=east] at (2.0,3.9) {high};

\coordinate (p0) at (2.35,-0.82);
\coordinate (p1) at (2.95,-0.18);
\coordinate (p2) at (3.48,0.72);
\coordinate (p3) at (4.05,1.42);
\coordinate (p4) at (4.62,2.20);
\coordinate (p5) at (5.25,2.95);
\draw[scoreline] (p0) -- (p1) -- (p2) -- (p3) -- (p4) -- (p5);
\foreach \p in {p0,p1,p2,p3,p4,p5} {
  \node[stepnode] at (\p) {};
}

\node[anchor=north] at (2.35,-0.95) {\(a_0\)};
\node[anchor=south] at (2.95,-0.08) {\(a_1\)};
\node[anchor=south] at (3.48,0.82) {\(a_2\)};
\node[anchor=south] at (4.05,1.52) {\(a_3\)};
\node[anchor=south] at (4.62,2.30) {\(a_4\)};
\node[anchor=south west] at (5.25,2.95) {\(a^*\)};

\coordinate (r1) at (3.48,-0.05);
\coordinate (r2) at (4.05,0.55);
\node[rejnode] at (r1) {};
\node[rejnode] at (r2) {};
\draw[weakedge] (p2) -- (r1);
\draw[weakedge] (p3) -- (r2);
\node[anchor=east] at (r1) {R1};
\node[anchor=east] at (r2) {R2};

\node[legend] at (7.22,1.55) {\textbf{Accepted}\\
1: S2 \(\rightarrow\) S1 scope\\
2: T2 \(\rightarrow\) T1 trigger\\
3: E2 \(\rightarrow\) E1 evidence\\
4: P2 \(\rightarrow\) P1 procedure\\[2pt]
\textbf{Rejected}\\
R1: unsupported timestamp\\
R2: scope drift};
\end{tikzpicture}}
\caption{AOG schematic and MCMC search trace for the service-shift case study.
Left: an AND node requires a complete five-slot rule, while each OR node selects
one candidate from a slot-local pool containing selected, rejected, and unused
alternatives. Right: a shortened proposal trajectory shows accepted edits raising
the internal structural score and rejected edits falling off the path, with
proposal details moved to the legend to keep the curve readable. Vertical
position is qualitative and illustrates the search dynamics rather than adding a
new reported metric.}
\label{fig:aog-case-study}
\end{figure*}

\begin{table*}[!t]
\centering
\footnotesize
\setlength{\tabcolsep}{4pt}
\begin{tabularx}{\textwidth}{@{}p{0.15\textwidth}YY@{}}
\toprule
\textbf{Stage} & \textbf{Intermediate structure} & \textbf{Role in the search} \\
\midrule
Goal and records &
Goal: coordinate shared service shifts with auditable arrival and closure.
Relevant vocabulary includes restaurant roles, \texttt{CheckIn},
\texttt{CheckOut}, \texttt{movement\_log}, \texttt{checkin\_log}, and
\texttt{checkout\_log}. &
The goal supplies the normative intent; the record layer restricts which actors,
actions, and evidence channels can be used without inventing unavailable logs. \\
Candidate pools &
\slot{Scope}: restaurant staff and shift records; \slot{Trigger}: arrival and
closure events; \slot{NormBody}: required \texttt{CheckIn}/\texttt{CheckOut};
\slot{EvidencePolicy}: movement/check-in/check-out logs; \slot{Procedure}:
manager review and correction path. &
Each slot receives multiple candidates from templates, schema terms, record
traces, and validated LLM proposals. The AOG search space is the cross-product of
these local choices. \\
Initial AOG state &
A safe but weak state covers generic members and requires
\texttt{CheckIn}/\texttt{CheckOut}, with broad evidence wording and little
procedural closure. &
This state is feasible but underspecified. It resembles the No LLM candidates
behavior: close to the inventory, but poor as an organizational rule. \\
Accepted MCMC edits &
The sampler replaces generic scope with \texttt{waiter}, \texttt{cook}, and
\texttt{duty\_manager}; coordinates the trigger with a 15-minute arrival window;
and swaps vague evidence for \texttt{movement\_log}, \texttt{checkin\_log}, and
\texttt{checkout\_log}. &
These edits increase intent alignment and procedural closure while preserving
record grounding. Unsupported variants such as a generic attendance-system
timestamp receive lower structural scores because they are not in the record
vocabulary. \\
Selected AOG and rendering &
The selected structure binds scope, trigger, norm body, evidence, and procedure
before the natural-language renderer writes the final clause. &
The final prose is therefore a realization of an inspectable structure. If the
rule later fails \hardok{}, the failure can be localized to selected payloads
rather than treated as an opaque text-generation error. \\
\bottomrule
\end{tabularx}
\caption{Worked AOG construction and MCMC search trace for a representative
service-shift goal. The example illustrates the intermediate commitments behind
the outputs in Table~\ref{tab:qual-main-app}.}
\label{tab:qual-aog-trace}
\end{table*}

Table~\ref{tab:qual-main-app} shows three outputs for the same shared
service-shift goal: coordinate arrivals and shift closure with auditable
evidence. Direct Prompting writes plausible workplace policy prose, but invents
an attendance system and timestamp evidence not present in the available record
layer. No LLM candidates stays inside the operational inventory and passes
\hardok{}, but the result is templatic and underspecified as policy language.
\ours{} writes the richest and most faithful clause, adding a 15-minute arrival
window and explicit \texttt{checkin\_log}/\texttt{checkout\_log} verification;
however, its support ratio falls below the binary gate because more selected
payloads are inferred rather than directly supported. This is the concrete form
of the quality-feasibility trade-off in Table~\ref{tab:main}.

\begin{table*}[!t]
\centering
\footnotesize
\setlength{\tabcolsep}{4pt}
\begin{tabularx}{\textwidth}{@{}p{0.18\textwidth}YYY@{}}
\toprule
\textbf{Claim} & \textbf{Representative excerpt} & \textbf{Diagnostic issue} &
\textbf{Score signal} \\
\midrule
Direct Prompting is fluent but unsupported &
``documented in the designated attendance system''; evidence from
``timestamps recorded in the attendance system'' &
The prose is coherent, but the support audit marks the attendance-system
timestamp and shift-commencement payloads as unavailable in the record layer. &
Avg \(=3.6\), \hardok{} = no \\
No LLM candidates is feasible but rigid &
``Covered members must perform \texttt{CheckIn} and \texttt{CheckOut}'' and
``record the actor role, scene, object, action, evidence channel, and observed
value'' &
The clause passes the feasibility gate by staying close to known actions and
logs, but reads like a fixed template and gives little substantive policy
guidance beyond slot enumeration. &
Avg \(=3.2\), \hardok{} = yes \\
\ours{} is richer but \hardok{} can be fragile &
``\texttt{CheckIn} within \texttt{trigger\_window\_minutes=15}'' using
\texttt{movement\_log}, \texttt{checkin\_log}, and \texttt{checkout\_log} &
The clause is more specific and faithful, but its real-data grounding ratio is
0.475: six payloads, including \texttt{shift}, \texttt{checkin\_record}, and
\texttt{checkout\_record}, lack direct resource support under the binary gate. &
Avg \(=4.6\), \hardok{} = no \\
\bottomrule
\end{tabularx}
\caption{Qualitative examples for the main method comparison on a shared
service-shift goal. The examples illustrate why the paper reports both rubric
quality and \hardok{} rather than collapsing them into a hidden score.}
\label{tab:qual-main-app}
\end{table*}

Table~\ref{tab:qual-slot-app} shows what the slot intervention changes. These
examples are useful because they make the heatmap in
Figure~\ref{fig:slot-sensitivity} concrete: corrupting scope changes who and
what the rule governs, corrupting the trigger changes when it activates, and
corrupting the norm body changes the required or forbidden conduct.

\begin{table*}[!t]
\centering
\footnotesize
\setlength{\tabcolsep}{4pt}
\begin{tabularx}{\textwidth}{@{}p{0.14\textwidth}YYY@{}}
\toprule
\textbf{Degraded slot} & \textbf{Correct slot content} &
\textbf{After degradation} & \textbf{Concrete error} \\
\midrule
\slot{Scope} &
The service-shift rule covers \texttt{waiter}, \texttt{cook}, and
\texttt{duty\_manager} in the \texttt{restaurant}, with
\texttt{shift}, \texttt{checkin\_record}, and
\texttt{checkout\_record}. &
The degraded rule adds \texttt{organization\_member} and
\texttt{patrol\_member}, and swaps in \texttt{learning\_material},
\texttt{play\_area}, and \texttt{shared\_resource}. &
The rule becomes about the wrong actors and objects, so the shift-attendance
goal is no longer the governing context. Avg drops from 3.2 to 2.0. \\
\slot{Trigger} &
The same service-shift rule activates in the \texttt{restaurant} when the
\texttt{CheckIn} \(\rightarrow\) \texttt{CheckOut} sequence is checked. &
The degraded rule activates in \texttt{public\_square} around
\texttt{Speak} and forbidden \texttt{Interact}. &
The activation condition moves from shift arrival/closure to unrelated public
interaction, so the rule fires at the wrong time. Avg drops from 3.2 to 2.0. \\
\slot{NormBody} &
An emergency-response rule requires \texttt{Patrol} and forbids
\texttt{Trade} under resource pressure. &
The degraded norm requires speaking to an owner before \texttt{Harvest} and
forbids false excuses before leaving a shift without checkout. &
The core duty is replaced by unrelated harvest and shift-excuse obligations,
so the rule no longer implements emergency response. Avg drops from 3.4 to
1.8. \\
\bottomrule
\end{tabularx}
\caption{Qualitative examples from the slot-sensitivity intervention. Each row
uses a paired direct-render baseline and a single-slot-degraded output.}
\label{tab:qual-slot-app}
\end{table*}

\raggedbottom
\section{Prompt Templates}
\label{app:prompts}

We include the prompt templates used by the baseline generators, auxiliary
generators, polishing stage, and evaluation rubrics. The frozen versions are
those used for reported scoring or polish stability checks; additional versions
are included to make template variation explicit.

This is the only appendix section that uses verbatim prompt blocks. Prompts are
typeset in shaded monospace blocks because line breaks, Markdown structure, and
placeholder names are part of the experimental material. We use a compact
monospace size so long templates do not dominate page layout. In contrast, result
summaries and audits are presented in prose and ordinary tables above rather
than copied verbatim into the paper.
Template placeholders denote schema fields and operational record fields in the
terminology of Section~\ref{sec:preliminaries}, not additional benchmark
concepts.

\promptinput{Direct Prompting}{b1_vanilla.v1.txt}

\promptinput{Structured-output Prompting}{b2_schema.v1.txt}

\promptinput{In-context Grounding}{b2_grounding.v1.txt}

\promptinput{Self-refine}{b2_refine.v1.txt}

\promptinput{Constrained-decoding Prompt}{b6_constrained.v1.txt}

\promptinput{Norm-synthesis Prompt}{b7_crsec.v1.txt}

\promptinput{LLM-augmented Candidate Generator}{llm_augmented_generator.v1.txt}

\promptinput{Reverse Parser}{reverse_parser.v1.txt}

\promptinput{Judge Prompt v1}{judge.v1.txt}

\promptinput{Independent Rubric v1}{independent_rubric.v1.txt}

\promptinput{Independent Rubric v2}{independent_rubric.v2.txt}

\promptinput{Frozen Independent Rubric v1}{independent_rubric.FROZEN.v1.txt}

\promptinput{Frozen Independent Rubric v2}{independent_rubric.FROZEN.v2.txt}

\promptinput{NL Polish Prompt v1}{nl_polish.v1.txt}

\promptinput{NL Polish Prompt v2}{nl_polish.v2.txt}

\promptinput{Frozen NL Polish Prompt v1}{nl_polish.FROZEN.v1.txt}

\promptinput{Frozen NL Polish Prompt v2}{nl_polish.FROZEN.v2.txt}

\promptinput{RealCharter-Bench Direct Prompt}{real_charter_b1.v1.txt}

\promptinput{RealCharter-Bench Self-refine Draft Prompt}{real_charter_b2r_draft.v1.txt}

\promptinput{RealCharter-Bench Self-refine Refine Prompt}{real_charter_b2r_refine.v1.txt}

\end{document}